\pdfoutput=1
\PassOptionsToPackage{table}{xcolor}
\documentclass[11pt]{article}

\usepackage{acl}

\usepackage{times}
\usepackage{latexsym}

\usepackage[T1]{fontenc}

\usepackage[utf8]{inputenc}

\usepackage{microtype}

\usepackage{inconsolata}

\usepackage{graphicx}
\usepackage{fontawesome5}
\usepackage[misc]{ifsym} 

\usepackage{booktabs}
\usepackage{amsmath}
\usepackage{amssymb}
\usepackage{enumitem} 
\usepackage{subcaption} 
\usepackage{multirow}
\usepackage[capitalize,noabbrev]{cleveref}
\usepackage{algorithm}
\usepackage{algpseudocode}

\newtheorem{theorem}{Theorem}[section]

\newtheorem{proof}{Proof}
\newtheorem{corollary}[theorem]{Corollary}

\newtheorem{assumption}[theorem]{Assumption}

\usepackage[breakable]{tcolorbox}
\DeclareRobustCommand{\remarkbox}[2][blue!6]{%
\begin{tcolorbox}[
        breakable,
        left=0pt,
        right=0pt,
        top=0pt,
        bottom=0pt,
        colback=#1,
        colframe=blue!6,
        width=\columnwidth,
        arc=0pt,outer arc=0pt,
        ]
        #2
\end{tcolorbox}
}

\newcommand{\std}[1]{\raisebox{-0.4ex}{\hspace{0.5pt}{\tiny\color{gray}$\pm$#1}}}

\title{Localize-Then-Decide Guarantees for LLM Judgments}

\author{
  \textbf{Xinyu Li\textsuperscript{1}},
  \textbf{Yi Zhou\textsuperscript{2}},
  \textbf{Guanqun Cao\textsuperscript{3}},
  \textbf{Zeyu Fu\textsuperscript{1}},
  \textbf{Tianjin Huang\textsuperscript{1}},
  \textbf{Gaojie Jin\textsuperscript{4,\Letter}}
\\
  \textsuperscript{1}University of Exeter\quad
  \textsuperscript{2}Cardiff University\quad
  \textsuperscript{3}University of the West of England\\
  \textsuperscript{4}University of Macau
\\
  Correspondence to: Gaojie Jin \texttt{<gaojie.jin.kim@gmail.com>}
}

\begin{document}
\maketitle
\begin{abstract}
Large language models (LLMs) are increasingly used as evaluators to assess output quality and preference alignment, yet providing reliable guarantees of agreement with human judgments remains challenging. Recent work introduces confidence-thresholding methods that provide such guarantees for pairwise comparisons, relying on the assumption that higher estimated confidence implies lower disagreement risk with humans. However, this assumption can break down when the number of candidate responses increases, since distributing probability mass across many alternatives can distort confidence estimates. To address this issue, we propose a Localize-Then-Decide framework. First, conformal prediction localizes a small shortlist that contains the human-preferred response with high probability. Then, a calibrated confidence-based rule selectively chooses a single response from this shortlist or abstains. This design restores the monotonic relationship between confidence and disagreement risk and enables high-probability agreement guarantees.
Experiments with multiple candidate sizes across several datasets and judge LLMs demonstrate that our framework consistently achieves higher guarantee success rates and substantially higher coverage than single-stage baselines.~\href{https://github.com/llm2409/Localize-Then-Decide}{\faGithub}
\end{abstract}

\section{Introduction}
\label{sec:intro}

Large language models (LLMs) are increasingly deployed as evaluators for assessing output quality and preference alignment~\citep{zheng2023judging,dubois2023alpacafarm,park2025adaptive,chiang2023can}.
Although this paradigm enables scalable and cost-effective alternatives to human annotation, a fundamental challenge persists~\citep{xiong2023can}: 
how can decisions made by LLMs acting as judges be made reliably trustworthy for downstream use, particularly when the model expresses high confidence?
To address this issue, \citet{jung2024trust} derive exact disagreement-risk guarantees conditioned on a calibration set, but restrict attention to the pairwise generations ($m=2$).
In practical deployments, however, a judge LLM is typically required to select a single human-preferred response from multiple candidates ($m>2$).
Extending such reliability guarantees beyond the pairwise regime is therefore essential for real-world applicability.

\begin{figure}
    \centering
    \includegraphics[width=1\linewidth]{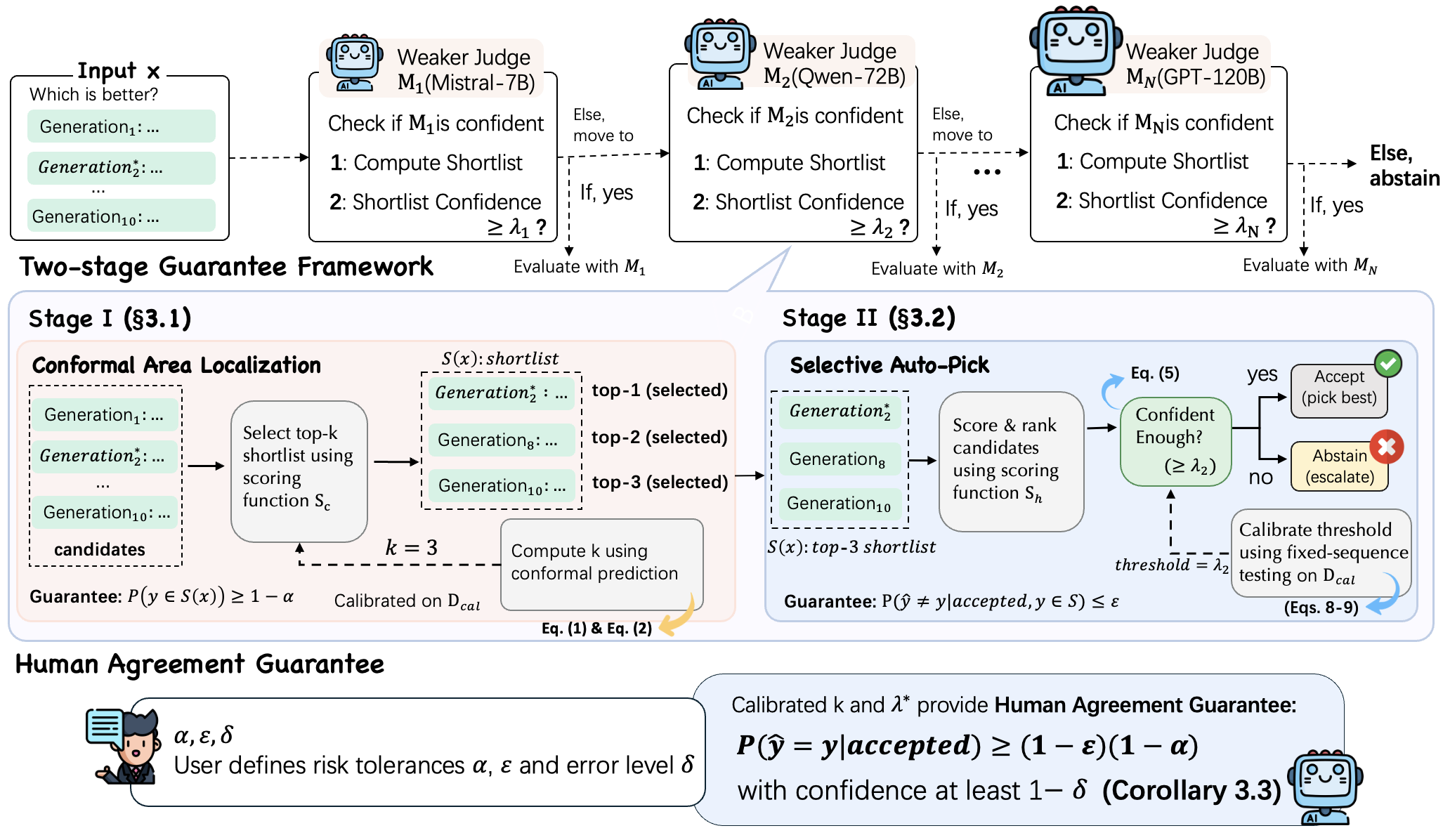}
    \vspace{-8mm}
    \caption{An overview of the Localize-Then-Decide (two-stage) guarantees framework.}
    \label{fig:framework}
\vspace{-5mm}
\end{figure}

\begin{figure*}[t]
    \centering
    \includegraphics[width=\textwidth]{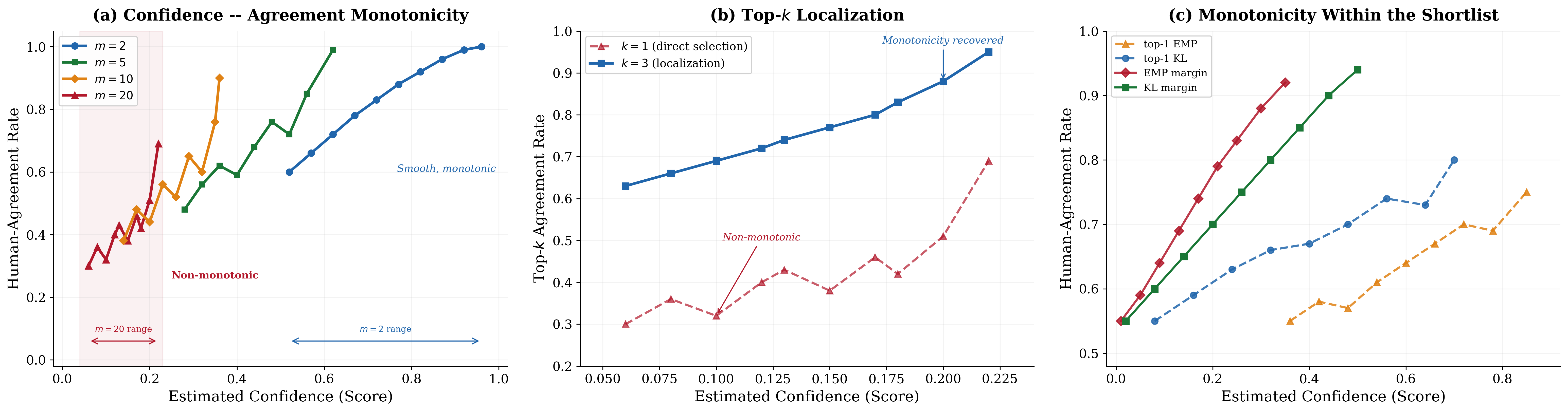}
    \vspace{-8mm}
    \caption{\textbf{Simulations on confidence--agreement monotonicity.}
    \textbf{(a)}~As the number of candidates $m$ increases, the monotone relationship between confidence scores and (above confidence) human-agreement rate progressively degrades. 
    \textbf{(b)}~Fixing $m{=}20$: although top-$1$ agreement loses monotonicity, top-$3$ agreement remains monotone with the same estimated confidence. 
    \textbf{(c)}~Within the localized (top-$3$) shortlist, the four estimated confidences (especially the margin ones) restore monotonicity. 
    Details of the simulations are given in \Cref{App:simulation}.
    }
    \label{fig:motivation}
\vspace{-5mm}
\end{figure*}

A key assumption underlying the above confidence-thresholding guarantees~\citep{jung2024trust} is monotonicity: instances assigned higher estimated confidence (probability) should exhibit lower disagreement risk with respect to human judgments.
However, this assumption becomes problematic in the multi-candidate setting. 
When an LLM judge assigns preference probabilities over $m$ candidates, the total probability mass must be distributed across all $m$ options. As $m$ increases, and the task typically becomes more challenging, the probability allocated to each individual candidate, including the true best, necessarily decreases as a direct consequence of normalization.
This structural dilution effect can violate the monotonicity assumption and thereby complicates the direct extension of pairwise judgment guarantees to the multi-candidate regime.
As shown in \Cref{fig:motivation}(a), increasing $m$ breaks the monotone confidence–agreement relationship; 
our experiments in \Cref{sec:experiments} further corroborate this phenomenon.

Although large $m$ may disrupt the monotonicity of top-$1$ agreement with respect to confidence scores, we empirically observe that top-$k$ agreement (e.g., top-3) and subsequent $k$-to-$1$ selection (e.g., selecting one from three) preserve a substantially more stable monotonic relationship with confidence measures.
As shown in \Cref{fig:motivation}(b), even when $m=20$, the top-3 agreement rate remains monotone with respect to the estimated confidence. 
Furthermore, as shown in \Cref{fig:motivation}(c), once the candidate set is restricted to the top-3 responses, the confidence–agreement relationship for selecting one out of three candidates becomes significantly more monotone (especially the margin-based confidence scores).
These observations motivate a two-stage guarantee framework for aligning LLM judgments with human preferences. 
In the first stage, we employ conformal prediction to localize a small candidate set (top-$k$) that contains the human-preferred answer with high probability. 
In the second stage, we attempt to automatically select a single response from this localized set while providing a high-probability correctness guarantee; otherwise, the system abstains.
We now present an informal summary of the resulting guarantee.
\remarkbox{
\begin{theorem}[Informal]
In a multi-candidate setting, for user-specified risk tolerances $\alpha$ (Stage~I: conformal area localization) and $\varepsilon$ (Stage~II: selective auto-picking), with confidence at least $1-\delta$, the proposed framework guarantees
\begin{equation}\nonumber
\begin{aligned}
\mathbb{P}(\text{LLM agrees with human} &| \text{Satisfied confidence})\\
&\quad\ge(1-\varepsilon)(1-\alpha).
\end{aligned}
\end{equation}
\end{theorem}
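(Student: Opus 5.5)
I plan to prove the statement by factoring the agreement event through the localized shortlist and bounding each factor with the guarantee of its own stage. Write $C_\alpha(X)$ for the Stage~I conformal shortlist, $Y^\ast$ for the human-preferred response, $\hat y(X)\in C_\alpha(X)$ for the Stage~II pick, and $A=\{\text{Stage II does not abstain}\}$ for the ``satisfied confidence'' event. Because $\hat y(X)\in C_\alpha(X)$, agreement forces $Y^\ast\in C_\alpha(X)$. Hence
\[
\mathbb{P}(\hat y = Y^\ast \mid A) = \mathbb{P}(\hat y = Y^\ast \mid A,\, Y^\ast\in C_\alpha)\,\mathbb{P}(Y^\ast\in C_\alpha \mid A).
\]
The plan is to bound the first factor below by $1-\varepsilon$ and the second by $1-\alpha$.

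\textbf{Stage~I.} I will use split conformal prediction on an exchangeable calibration set, with nonconformity score given by the rank of $Y^\ast$ in the judge's ordering. This gives marginal coverage $\mathbb{P}(Y^\ast\in C_\alpha)\ge 1-\alpha$. To hold with probability $1-\delta_1$ over the calibration draw, I will use the PAC (training-conditional) variant, which picks the quantile through a binomial tail bound.

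\textbf{Stage~II.} Following the fixed-sequence / Clopper--Pearson thresholding of \citet{jung2024trust}, I will calibrate the threshold $\lambda$ on calibration points restricted to $Y^\ast\in C_\alpha$. On the event $A=\{\text{conf}(X)\ge\lambda\}$, this gives $\mathbb{P}(\hat y\neq Y^\ast \mid A, Y^\ast\in C_\alpha)\le\varepsilon$ with probability $1-\delta_2$. The monotonicity assumption inside the shortlist justifies the scan. A union bound with $\delta_1+\delta_2=\delta$ then yields overall confidence $1-\delta$. Alternatively, if the paper splits the data, it can condition on Stage~I and apply the Stage~II bound conditionally.

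\textbf{Main obstacle.} The hard step is the second factor, because Stage~I coverage is marginal while we need it conditional on $A$. Selection on high confidence could in principle favor points where $Y^\ast\notin C_\alpha$. I would first try to obtain it from an explicit assumption that conditioning on the confidence event does not decrease coverage, i.e.\ $\mathbb{P}(Y^\ast\in C_\alpha\mid A)\ge \mathbb{P}(Y^\ast\in C_\alpha)$. This is plausible under the confidence--coverage monotonicity of \Cref{fig:motivation}(b) and is presumably stated among the paper's assumptions.

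Failing that, I would recalibrate Stage~I conditionally on $A$, i.e.\ apply conformal calibration only to calibration points passing the threshold. Exchangeability is preserved within the selected subset if $\lambda$ is fixed on independent data, which gives conditional coverage directly. I would also reinterpret the Stage~II guarantee so that calibration is done on all points (including non-covered ones), counting non-coverage as an error. In that case the result could even improve to a single $1-\varepsilon$ bound; the stated product form is the weaker and safer version.
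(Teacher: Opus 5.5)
Your factorization through the shortlist matches the paper's proof of Corollary~\ref{thm:composed-c}. So does your Stage~II step, a fixed-sequence Clopper--Pearson scan on the covered calibration points $\mathcal{D}_{\mathrm{in}}$. You are also right that the second factor needs an extra assumption.

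The paper states that assumption as follows: $\lambda\mapsto\mathbb{P}(y\notin\mathcal{S}(x)\mid\Delta_{\mathcal{S}}(x)\ge\lambda)$ is nonincreasing. It then uses that $\Delta_{\mathcal{S}}(x)\ge 0$ almost surely, since $\hat y$ is the argmax within $\mathcal{S}(x)$, so at $\lambda=0$ the conditional miss rate equals the marginal one. This gives your condition $\mathbb{P}(y\in\mathcal{S}(x)\mid A)\ge\mathbb{P}(y\in\mathcal{S}(x))$ at every $\lambda^*\ge 0$ the scan might return, which you need because $\lambda^*$ is data-dependent.

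The real difference is Stage~I. The paper uses ordinary split conformal, taking the $\lceil(n+1)(1-\alpha)\rceil$-th order statistic, and charges all of $\delta$ to Stage~II. It then inserts the marginal bound $\mathbb{P}(y\notin\mathcal{S}(x))\le\alpha$, which averages over the calibration draw, into a statement claimed with probability $1-\delta$ over that same draw. Your training-conditional quantile with $\delta_1+\delta_2=\delta$ makes the coverage bound hold for the realized calibration set, so your version is the more rigorous one. The cost is a possibly larger $k$, and you certify a modified Stage~I rather than the paper's exact rule.

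Three further points:
\begin{itemize}
\item Neither the paper nor your main argument accounts for $\mathcal{D}_{\mathrm{in}}$ being selected with a $k$ computed from the same calibration data. Your sample-splitting aside repairs this. So does running the scan at level $\delta_2/m$ for each of the at most $m$ possible values of $k$.
\item Your last alternative, scanning on all of $\mathcal{D}$ with non-coverage counted as an error, would certify $\mathbb{P}(\hat y=y\mid A)\ge 1-\varepsilon$ directly. It needs no cross-stage assumption, and the paper does not take this route.
\item Your other fallback, recalibrating Stage~I on the points that pass the threshold, is circular as written. Passing the threshold is defined through $\Delta_{\mathcal{S}}$, and hence through $k$.
\end{itemize}
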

}

Empirically, we demonstrate that, under the proposed framework, the scoring (confidence) functions exhibit a substantially improved monotonic relationship between confidence and disagreement risk, leading to higher success rates in achieving target human–LLM agreement levels across multiple datasets and judge LLMs.
To summarize, the main contributions of this work are as follows:
\begin{itemize}[leftmargin=*]
\item[$\star$] \textbf{Practical Extension.} 
We extend existing human–LLM judgment guarantee frameworks from the pairwise setting ($m=2$) to the more practical multi-generation setting ($m>2$). We show that the presence of multiple candidates can cause estimated confidence scores to violate the monotonicity assumption that underlies confidence-thresholding guarantees.

\item[$\star$] \textbf{Two-Stage Guarantee.} 
To address this issue, we propose a two-stage guarantee framework for aligning LLM judgments with human preferences, motivated by the observation that such a decomposition restores the monotonic relationship between confidence and agreement. In the first stage, we employ conformal prediction to localize a small candidate set that contains the human-preferred response with high probability. In the second stage, we automatically select a single response from this localized set while providing a high-probability correctness guarantee; otherwise, the system abstains or escalates.

\item[$\star$] \textbf{Experiments.} 
We conduct extensive experiments across multiple datasets (e.g., TL;DR, Chatbot Arena, HH-RLHF, and AlpacaEval) and judge LLMs (e.g., Llama, Qwen, GPT-OSS, and DeepSeek) under various candidate-count settings ($m\in\{5,10,20\}$). 
The results demonstrate that our framework effectively restores the monotonic relationship between confidence and disagreement risk and significantly improves the reliability of human–LLM agreement guarantees.
\end{itemize}

\section{Preliminary}

\subsection{Problem Setup}

Let $f_{\mathrm{LM}}:\mathcal{X}\rightarrow\mathcal{Y}$ denote an LLM-based judge.  
Each input $x\in\mathcal{X}$ consists of a query together with $m$ candidate responses $\{g_1,\ldots,g_m\}$, and the true label $y\in\mathcal{Y}=\{1,\ldots,m\}$ indicates the index of the response preferred by the human.  
We write $z=(x,y)$ for a labeled instance drawn from an unknown underlying distribution.

Given $f_{\mathrm{LM}}$, we consider two real-valued scoring (confidence) functions:  
$\mathbb{S}_c(x,g_i)$, used in the first-stage conformal prediction procedure, and  
$\mathbb{S}_h(x,g_i)$, used in the second-stage hypothesis testing procedure.  
In both cases, larger scores indicate higher predicted preference for $g_i$, i.e., a greater likelihood of being the human-best response.
Our objective is to output:
\begin{enumerate}[leftmargin=*]
\item A small area (shortlist) $\mathcal{S}(x)\subseteq\{1,\ldots,m\}$ that contains the true label $y$ with high probability;   
\item Optionally, a single automatically selected index $\hat y\in\mathcal{S}(x)$ when such a choice can be certified as reliable. Otherwise, the system abstains or escalates the decision.
\end{enumerate}

\subsection{Assumptions}

Our framework relies on the following standard and widely adopted assumptions.

\begin{assumption}
\label{ass:exch-c}
Let $\mathcal{D}=\{z_i=(x_i,y_i)\}_{i=1}^n$ be a calibration set and $z_{n+1}$ a future test point.
Assume $(z_1,\dots,z_n,z_{n+1})$ are exchangeable (i.e., i.i.d.).
\end{assumption}

\begin{assumption}
\label{ass:unique-c}
For every $x$ and the corresponding $\{g_1,..., g_m\}$, the human-best preference $y$ is unique.
Ties in ranking by scoring functions are broken deterministically.
\end{assumption}

\section{Two-Stage Certified Selection for Multi-Generations}
\label{sec:twofold-coupled}

\subsection{Stage~I: Conformal Area Localization}
\label{sec:stage1-c}

In this subsection, we describe a conformal prediction procedure for localizing a small area (shortlist) $\mathcal{S}(x)$ that contains the true label $y$ with probability at least $1-\alpha$.
We first define a nonconformity score based on the rank of the human-best response under the scoring function $\mathbb{S}_c(x,\cdot)$:
\begin{small}
\begin{equation}
R(z)
\;=\;
\big|\{ i\in\{1,\dots,m\}: \mathbb{S}_c(x,g_i) \ge \mathbb{S}_c(x,g_y) \}\big|,
\label{eq:rank-nonconf-c}
\end{equation}
\end{small}which corresponds to the number of candidates whose score is no smaller than that of the human-best response.
Given the calibration set, let $k$ denote the conformal quantile defined as
\begin{small}
\begin{equation}
k=
\text{the } \left\lceil (n+1)(1-\alpha)\right\rceil\text{-th order statistic of } \{R(z_i)\}_{i=1}^{n}.
\label{eq:k-def-c}
\end{equation}
\end{small}For a new test input $x$, we construct the area set
\begin{small}
\begin{equation}
\mathcal{S}(x) = \{\text{indices of the top-$k$ candidates under } \mathbb{S}_c(x,\cdot)\}.
\label{eq:area-set-c}
\end{equation}
\end{small}We now state the finite-sample coverage guarantee.
\remarkbox{
\begin{theorem}[Finite-sample area guarantee]
\label{thm:area-c}
Under Assumptions~\ref{ass:exch-c}, \ref{ass:unique-c}, for any test input $x$, the area set $\mathcal{S}(x)$ constructed in \eqref{eq:area-set-c} satisfies
\begin{equation}
\mathbb{P}\big(y\in \mathcal{S}(x)\big) \;\ge\; 1-\alpha.
\label{eq:area-guarantee-c}
\end{equation}
\end{theorem}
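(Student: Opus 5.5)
The plan is to reduce the statement to the standard split-conformal quantile lemma applied to the integer-valued nonconformity scores $R(z_1),\dots,R(z_n),R(z_{n+1})$ from \eqref{eq:rank-nonconf-c}. The first step is to translate the event $y\in\mathcal{S}(x)$ into an event about ranks. Under Assumption~\ref{ass:unique-c}, ties among the $\mathbb{S}_c(x,g_i)$ are broken deterministically. So $R(z)$ is exactly the position of $g_y$ in the (tie-broken) descending order of $\mathbb{S}_c(x,\cdot)$. Strictly, \eqref{eq:rank-nonconf-c} counts ``$\ge$''; I will interpret the comparison via the same deterministic tie-breaking used to form the top-$k$ list, so that $R(z)$ is a genuine rank. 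I will remark that without this interpretation $R$ only overcounts, which still makes the inclusion below valid. Consequently $R(z_{n+1})\le k$ implies $y_{n+1}$ is among the top-$k$ indices, i.e.\ $\{R(z_{n+1})\le k\}\subseteq\{y\in\mathcal{S}(x)\}$. It therefore suffices to show $\mathbb{P}(R(z_{n+1})\le k)\ge 1-\alpha$.

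The second step is the quantile argument. Since $R$ is a fixed measurable function (the scorer $\mathbb{S}_c$ is fixed and does not depend on the calibration data), Assumption~\ref{ass:exch-c} makes $(R(z_1),\dots,R(z_{n+1}))$ exchangeable. Let $r=\lceil (n+1)(1-\alpha)\rceil$. If $r>n$, define $k=m$ (or $+\infty$), so that $\mathcal{S}(x)$ is everything and the claim is trivial; I will state this convention explicitly. Otherwise $k=R_{(r)}$, the $r$-th smallest calibration score. I then use the standard fact that $R(z_{n+1})\le R_{(r)}$ holds iff $R(z_{n+1})$ is at most the $r$-th smallest of all $n+1$ scores. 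This holds because inserting the test score can only shift the relevant order statistic in a way consistent with the inequality, and it is checked directly by considering whether the test score lies below or above $R_{(r)}$.

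By exchangeability, the rank of $R(z_{n+1})$ among the $n+1$ values is uniform on $\{1,\dots,n+1\}$ when ties are broken uniformly at random. With ties, which are unavoidable here because scores are integers in $\{1,\dots,m\}$, the probability of being $\le$ the $r$-th smallest only increases. This gives $\mathbb{P}(R(z_{n+1})\le k)\ge r/(n+1)\ge 1-\alpha$.

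The main obstacle is handling ties carefully, since $R$ is discrete. I would address it by comparing against a random tie-breaking: ties can only help the event ``$\le$''. Concretely, I count the indices $i\in\{1,\dots,n+1\}$ with $R(z_i)\le$ the $r$-th smallest of all values. There are at least $r$ such indices, and by exchangeability each index is equally likely to satisfy this condition. Averaging therefore gives probability at least $r/(n+1)$.
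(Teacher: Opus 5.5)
Your proposal is correct and follows the same route as the paper's proof. In both, exchangeability of the rank scores $R(z_1),\dots,R(z_{n+1})$ plus the standard conformal quantile argument gives $\mathbb{P}(R(z_{n+1})\le k)\ge 1-\alpha$, and the event $\{R(z_{n+1})\le k\}$ then forces $y\in\mathcal{S}(x)$. Your version is more careful than the paper's in two places: the paper asserts an ``if and only if'' that can fail under ties with the $\ge$-count definition, while you correctly note that only the inclusion is needed; and you supply the explicit averaging argument for integer-valued scores along with the $r>n$ convention.
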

}
\noindent
\emph{Proof.}
\emph{See \Cref{App:proof}.}
\hfill $\square$

\subsection{Stage~II: Selective Auto-Pick}
\label{sec:stage2-c}

In this subsection, we seek to automatically select a single index from the localized area $\mathcal{S}(x)$ while providing a high-probability correctness guarantee; otherwise, the system abstains or escalates.

Let $\hat y$ denote the index predicted by the judge LLM $f_{\mathrm{LM}}$.
We define a margin-based confidence score within $\mathcal{S}(x)$ as
\begin{equation}
\Delta_{\mathcal{S}}(x)
\;=\;
\mathbb{S}_h(x,g_{_{\hat y}})
\;-\;
\max_{i\in \mathcal{S}(x),\, i\neq \hat y}
\mathbb{S}_h(x,g_i).
\label{eq:marginS}
\end{equation}

Given a threshold $\lambda$, we define the conditional error rate of the auto-pick rule restricted to $\mathcal{S}(x)$ as
\begin{equation}
R_{\mathcal{S}}(\lambda)
\;=\;
\mathbb{P}\!\left(\hat y\neq y \;\middle|\; \Delta_{\mathcal{S}}(x)\ge \lambda\right).
\label{eq:selective-riskS}
\end{equation}
Within $\mathcal{S}(x)$, observe that
$\{\hat y \neq y\}
\;\subseteq\;
\{y\notin \mathcal{S}(x)\}
\;\cup\;
\{\hat y \neq y,\; y\in \mathcal{S}(x)\}$.
Therefore, $\forall\; \lambda$,
\begin{equation}
\begin{aligned}
&R_{\mathcal{S}}(\lambda)
\;\le\;
\underbrace{\mathbb{P}\!\left(y\notin \mathcal{S}(x) \;\middle|\; \Delta_{\mathcal{S}}(x)\ge \lambda\right)}_{\text{localization miss under acceptance}}\\
&\quad\quad\;+
\underbrace{\mathbb{P}\!\left(\hat y\neq y \;\middle|\; \Delta_{\mathcal{S}}(x)\ge \lambda,\; y\in \mathcal{S}(x)\right)}_{\text{within-area selection error}} .
\end{aligned}
\label{eq:RS-decomp}
\end{equation}
In \eqref{eq:RS-decomp}, the second term corresponds to the component that Stage~II can directly calibrate, whereas the first term is partly controlled by the Stage~I localization guarantee.

To calibrate the within-area selection error, we still use the calibration set
$\mathcal{D}=\{z_j\}_{j=1}^{n}$.
For a given threshold $\lambda$, define the set of accepted-and-covered indices
\begin{equation}\nonumber
\begin{aligned}
&A^{\mathrm{in}}(\lambda)
\;=\;
\{ j : \Delta_\mathcal{S}(x_j)\ge \lambda \ \text{and}\ y_j\in \mathcal{S}(x_j) \},\\
&N^{\mathrm{in}}(\lambda)=|A^{\mathrm{in}}(\lambda)|,
\end{aligned}
\end{equation}
and the corresponding number of within-area errors
\[
Y^{\mathrm{in}}(\lambda)=\sum_{j\in A^{\mathrm{in}}(\lambda)} \mathbf{1}\{\hat y_j\ne y_j\}.
\]
Whenever $N^{\mathrm{in}}(\lambda)\ge 1$, we define the empirical and population within-area error rates as
\begin{equation}\nonumber
\begin{aligned}
&\widehat R_\mathcal{S}^{\mathrm{in}}(\lambda)
\;=\;
\frac{Y^{\mathrm{in}}(\lambda)}{N^{\mathrm{in}}(\lambda)}, \\
&R_\mathcal{S}^{\mathrm{in}}(\lambda)
\;=\;\mathbb{P}\!\left(\hat y\neq y \;\middle|\; \Delta_{\mathcal{S}}(x)\ge \lambda,\; y\in \mathcal{S}(x)\right).
\end{aligned}
\end{equation}
Since $Y^{\mathrm{in}}(\lambda)\mid N^{\mathrm{in}}(\lambda)\sim \mathrm{Bin} \big(N^{\mathrm{in}}(\lambda), R_\mathcal{S}^{\mathrm{in}}(\lambda)\big)$
under exchangeability,
we define the exact $(1-\delta)$ upper confidence bound (UCB):
\begin{equation}
\label{eq:Rin-ucb-fixedseq}
\begin{aligned}
\widehat R_{\mathcal{S}}^{\mathrm{in},+}(\lambda)
\;:=\;
\sup\Big\{ r\in[0,1] &:
\mathbb{P}\big(\mathrm{Bin}(N^{\mathrm{in}}(\lambda), r)\\
&\le Y^{\mathrm{in}}(\lambda)\big)\ \ge\ \delta \Big\}.
\end{aligned}
\end{equation}

\paragraph{Fixed-sequence threshold selection.}
Empirically, decreasing $\lambda$ tends to near-monotonically increase risk (see Figures~\ref{fig:motivation}(c), \ref{fig:real-motivation}(c)), because more
ambiguous instances are accepted; i.e., $\widehat R_{\mathcal{S}}^{\mathrm{in},+}(\lambda)$ tends to increase as $\lambda$ decreases. 
This allows us to use fixed sequence testing \citep{bauer1991multiple}, wherein we test from the largest value of $\lambda$ (e.g., 0.99) to
a progressively smaller value, and stop at the last time $\widehat R_{\mathcal{S}}^{\mathrm{in},+}$ is below the target risk $\varepsilon$, i.e.,
\begin{small}
\begin{equation}
\lambda^*
\;=\;
\inf\Big\{ \lambda:
\widehat R_{\mathcal{S}}^{\mathrm{in},+}(\lambda') \le \varepsilon\ \ \text{for all}\ \ \lambda'\ge \lambda
\Big\}.
\label{eq:lambda-star-fixedseq}
\end{equation}
\end{small}This is precisely the fixed-sequence testing rule adopted for selective evaluation to avoid Bonferroni
conservatism in large hypothesis spaces.
Then, we get the following guarantee.

\remarkbox{
\begin{theorem}
\label{thm:within-area-fixedseq}
Given Assumptions~\ref{ass:exch-c}, \ref{ass:unique-c}, 
let $\lambda^*$ be selected by the fixed-sequence rule~\eqref{eq:lambda-star-fixedseq}
using the binomial upper confidence bound defined in~\eqref{eq:Rin-ucb-fixedseq}, applied to the calibration subset
\[
\mathcal D_{\mathrm{in}}
:= \left\{ (x_i,y_i)\in\mathcal D \;:\; y_i \in \mathcal{S}(x_i) \right\}.
\]
Then, with probability at least $1-\delta$ over the draw of the calibration sample,
the following holds for a new example $(x,y)$:
\begin{equation}
\mathbb{P}\!\left(\hat y\ne y\ \middle|\ \Delta_\mathcal{S}(x)\ge \lambda^*, y\in\mathcal{S}(x)\right)
\;\le\;
\varepsilon.
\label{eq:within-area-fixedseq-guarantee}
\end{equation}
\end{theorem}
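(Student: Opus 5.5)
The proof combines a single exact binomial confidence bound at each fixed threshold with the standard fixed-sequence testing argument of \citet{bauer1991multiple}. The point is that no multiplicity correction is needed, because the testing order over $\lambda$ is fixed in advance.

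\emph{Setup: fixed thresholds and a clean binomial model.} First fix the ingredients of Stage~I. The integer $k$ in \eqref{eq:k-def-c}, and hence the map $x\mapsto\mathcal{S}(x)$, depends on the calibration data. To keep the binomial model exact in Stage~II, I will condition on $\mathcal{S}$, or equivalently treat $k$ as determined by data independent of the points entering $\mathcal D_{\mathrm{in}}$ (e.g.\ a sample split). Given $\mathcal{S}$, the statistic $\Delta_{\mathcal S}(x)$ and the events $\{y\in\mathcal S(x)\}$ and $\{\hat y\neq y\}$ are deterministic functions of $z$.

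Let $\Lambda=\{\lambda_1>\lambda_2>\dots\}$ be the prespecified grid, running from the largest threshold downward. For each $\lambda\in\Lambda$, consider the null hypothesis
\[
H_\lambda:\; R^{\mathrm{in}}_{\mathcal S}(\lambda)>\varepsilon .
\]
Conditioned on the acceptance pattern, the calibration points in $A^{\mathrm{in}}(\lambda)$ are i.i.d.\ draws from the law of $z$ given $\{\Delta_{\mathcal S}(x)\ge\lambda,\,y\in\mathcal S(x)\}$. Hence
\[
Y^{\mathrm{in}}(\lambda)\mid N^{\mathrm{in}}(\lambda)\sim\mathrm{Bin}\big(N^{\mathrm{in}}(\lambda),R^{\mathrm{in}}_{\mathcal S}(\lambda)\big).
\]

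\emph{Step 1: per-threshold validity.} The Clopper--Pearson construction \eqref{eq:Rin-ucb-fixedseq} gives, for each fixed $\lambda$,
\[
\mathbb P\big(\widehat R^{\mathrm{in},+}_{\mathcal S}(\lambda)<R^{\mathrm{in}}_{\mathcal S}(\lambda)\big)\le\delta .
\]
This follows by monotonicity of $r\mapsto\mathbb P(\mathrm{Bin}(N,r)\le Y)$ in $r$, applied conditionally on $N^{\mathrm{in}}(\lambda)$ and then integrated. The case $N^{\mathrm{in}}(\lambda)=0$ yields the trivial bound $1$, so we never reject there when $\varepsilon<1$. Consequently, the test that rejects $H_\lambda$ when $\widehat R^{\mathrm{in},+}_{\mathcal S}(\lambda)\le\varepsilon$ has level $\delta$.

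\emph{Step 2: fixed-sequence FWER control.} The rule \eqref{eq:lambda-star-fixedseq} rejects $H_{\lambda_1},H_{\lambda_2},\dots$ in order and stops at the first failure; $\lambda^*$ is the last rejected threshold. Let $j_0$ be the first index whose null $H_{\lambda_{j_0}}$ is true. Any false rejection requires rejecting $H_{\lambda_{j_0}}$ itself, since the procedure cannot pass $j_0$ without doing so. Therefore
\[
\mathbb P(\text{any true null rejected})\le\mathbb P\big(\text{reject }H_{\lambda_{j_0}}\big)\le\delta .
\]
Note that $j_0$ is determined by the population quantities, not by the data, so it is a fixed index. On the complementary event, which has probability at least $1-\delta$, every rejected $\lambda$ satisfies $R^{\mathrm{in}}_{\mathcal S}(\lambda)\le\varepsilon$. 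In particular this holds for $\lambda^*$, giving \eqref{eq:within-area-fixedseq-guarantee} for a fresh $(x,y)$ independent of the calibration set. If no threshold is rejected, the system always abstains and the statement holds vacuously.

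\emph{Main obstacle.} The delicate step is justifying the exact conditional binomial law while $\mathcal S$ (through $k$) and $\lambda^*$ are both data-dependent. I would handle $\lambda^*$ via the fixed-sequence argument above, which needs no monotonicity of the true risk. For $\mathcal S$, I would condition on $k$, or use a split of $\mathcal D$, so that membership in $\mathcal D_{\mathrm{in}}$ and acceptance are i.i.d.\ indicator events under Assumption~\ref{ass:exch-c}. Assumption~\ref{ass:unique-c} ensures $\hat y$, $\mathcal S(x)$ and $\Delta_{\mathcal S}$ are well defined. If one insists on reusing the full $\mathcal D$ for $k$, the argument holds conditionally on $k$, treating $k$ as fixed, as the paper's definition of $\mathcal D_{\mathrm{in}}$ implicitly does.
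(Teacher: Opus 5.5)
Your proposal follows the paper's proof: exact binomial (Clopper--Pearson) inversion gives a level-$\delta$ test of $H_\lambda: R^{\mathrm{in}}_{\mathcal S}(\lambda)>\varepsilon$ at each fixed grid point, and the fixed-sequence argument bounds the probability of rejecting any true null by $\delta$, which carries over to $\lambda^*$. Your first-true-null argument and your handling of $N^{\mathrm{in}}(\lambda)=0$ spell out what the paper only cites, and you are more careful than the paper in flagging that $\mathcal S$ depends on $\mathcal D$ through $k$, which the paper silently treats as fixed. Your fallback of ``conditioning on $k$'' is not exact, however: conditioning on an order statistic of the $R(z_i)$ constrains how many points of $\mathcal D_{\mathrm{in}}$ have $R(z_i)=k$, which breaks the i.i.d.\ structure behind the binomial law, so the sample-split version is the one that makes the argument rigorous.
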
}

\noindent\emph{Proof.}
\emph{See \Cref{App:proof}.}
\hfill $\square$

Empirically, increasing $\lambda$ approximately increases the top-$k$ agreement rate (i.e., decreases $\mathbb P(y\notin\mathcal{S}(x)\mid\Delta_\mathcal{S}(x)\ge\lambda)$), as directly tested in \Cref{tab:cross-stage-monotonicity}.
Under this empirically supported monotonicity assumption, we derive a composed bound for the selective error of the auto-pick decision.
Stage~I's marginal coverage guarantee is distribution-free under exchangeability, whereas the composed end-to-end guarantee below additionally relies on this cross-stage monotonicity condition.

\remarkbox{
\begin{corollary}
\label{thm:composed-c}
Given Assumptions~\ref{ass:exch-c}, \ref{ass:unique-c}, and the monotonicity assumption on
$\mathbb P(y\notin \mathcal{S}(x)\mid \Delta_\mathcal{S}(x)\ge \lambda)$,
let $\mathcal{S}(x)$ be produced by Stage~I and $\lambda^*$ calibrated by Stage~II as in
\Cref{thm:within-area-fixedseq}.
Then, with probability at least $1-\delta$ over the calibration sample used in
Stage~II, we get
\begin{equation}\nonumber
\mathbb{P}\big(\hat y= y\ \big|\ \Delta_\mathcal{S}(x)\ge \lambda^*\big)
\;\ge\;
(1-\varepsilon)(1-\alpha).
\label{eq:composed-bound}
\end{equation}
\end{corollary}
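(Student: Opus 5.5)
The plan is to factor the acceptance event $E=\{\Delta_{\mathcal S}(x)\ge\lambda^*\}$ according to whether the true label was localized:
\begin{equation}\nonumber
\begin{aligned}
\mathbb{P}(\hat y=y\mid E)
&=\mathbb{P}(\hat y=y,\ y\in\mathcal S(x)\mid E)\\
&=\mathbb{P}(\hat y=y\mid E,\ y\in\mathcal S(x))\,\mathbb{P}(y\in\mathcal S(x)\mid E).
\end{aligned}
\end{equation}
The first equality uses $\hat y\in\mathcal S(x)$. By construction $\hat y$ is the index whose margin in \eqref{eq:marginS} is taken inside $\mathcal S(x)$, so $\hat y=y$ forces $y\in\mathcal S(x)$. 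The second equality is the chain rule. It requires $\mathbb P(E)>0$, and also $\mathbb P(E\cap\{y\in\mathcal S(x)\})>0$; if that second probability is zero, the left side is trivially handled. The two factors are then bounded separately.

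For the first factor, we work on the event of probability at least $1-\delta$ over the Stage~II calibration sample provided by \Cref{thm:within-area-fixedseq}. On that event, \eqref{eq:within-area-fixedseq-guarantee} gives $\mathbb{P}(\hat y\ne y\mid E,\,y\in\mathcal S(x))\le\varepsilon$, so the first factor is at least $1-\varepsilon$. This is the only place where randomness in $\lambda^*$ enters. We treat $\lambda^*$ as fixed by conditioning on the calibration sample, so $(x,y)$ is a fresh draw and the two conditional probabilities are ordinary population quantities.

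For the second factor, we need $\mathbb{P}(y\notin\mathcal S(x)\mid\Delta_{\mathcal S}(x)\ge\lambda)\le\alpha$ at $\lambda=\lambda^*$. The monotonicity assumption says this miss probability is nonincreasing in $\lambda$. Taking $\lambda\to-\infty$, so that the conditioning event is the whole space, the unconditional miss probability is at most $\alpha$ by \Cref{thm:area-c}. Monotonicity then transfers this bound to every threshold, and in particular to $\lambda^*$. Hence $\mathbb{P}(y\in\mathcal S(x)\mid E)\ge1-\alpha$, and multiplying the two bounds yields $(1-\varepsilon)(1-\alpha)$ on the same probability-$(1-\delta)$ event.

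The main obstacle is this second factor. \Cref{thm:area-c} is a marginal statement averaged over both the Stage~I calibration draw and the test point, whereas $\lambda^*$ is data-dependent. Both stages also reuse $\mathcal D$, and $\mathcal S(\cdot)$ itself depends on $k$. I would handle this by reading the monotonicity assumption as holding for the conditional miss probability averaged jointly over the test point and the Stage~I randomness, with $\mathcal S$ held as produced. Under that reading, the inequality at $\lambda^*$ follows pointwise for any value $\lambda^*$ can take. I would state explicitly that this reading is the content of the cross-stage assumption, rather than try to derive it from exchangeability alone.
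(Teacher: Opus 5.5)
Your proposal is correct relative to the paper and follows its proof essentially step for step: the same factorization $\mathbb{P}(\hat y=y\mid A)=\mathbb{P}(\hat y=y\mid A,\,y\in\mathcal{S}(x))\,\mathbb{P}(y\in\mathcal{S}(x)\mid A)$, \Cref{thm:within-area-fixedseq} for the first factor, and monotonicity plus \Cref{thm:area-c} for the second. The only cosmetic difference is that the paper anchors the monotonicity at $\lambda=0$ (using $\Delta_{\mathcal{S}}(x)\ge 0$ almost surely, since $\hat y$ is the argmax inside $\mathcal{S}(x)$) rather than letting $\lambda\to-\infty$. The marginal-versus-calibration-conditional mismatch you flag is genuine, and the paper simply passes over it by applying \Cref{thm:area-c} to the calibration-conditional inner probability. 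However, your proposed reading (averaging over Stage~I randomness while holding $\mathcal{S}$ and $\lambda^*$ as produced) does not mesh with the first factor, which is conditional on the same $\mathcal{D}$ that fixes $k$. The clean repair is to state monotonicity for the realized calibration sample and either assume $\mathbb{P}(y\notin\mathcal{S}(x)\mid\mathcal{D})\le\alpha$ outright, or replace \eqref{eq:k-def-c} by a training-conditional (PAC) conformal quantile achieving this with probability $1-\delta'$ and conclude at level $1-\delta-\delta'$ by a union bound.
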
}

\noindent\emph{Proof.}
\emph{See \Cref{App:proof}}.
\hfill $\square$

\remarkbox{
\textbf{Remark.}
\emph{
Based on the monotonicity of top-$k$ agreement (e.g., top-3) and the subsequent $k$-to-$1$ selection, we combine conformal prediction with fixed-sequence testing to obtain Corollary~\ref{thm:composed-c}. 
The result yields an interpretable decomposition of the final selective accuracy guarantee into two factors: $(1-\alpha)$, the coverage probability of Stage~I localization (i.e., the human-preferred response lies in the shortlist $\mathcal{S}(x)$), and $(1-\varepsilon)$, the conditional correctness of the Stage~II decision rule on accepted instances. 
This multiplicative form highlights the role of the two-stage design: Stage~I localizes reliable candidates, while Stage~II controls decision risk through calibrated thresholding. 
Improving either shortlist coverage or conditional decision reliability directly strengthens the overall guarantee.
}}

\section{Experiments}
\label{sec:experiments}

The previous sections establish the framework that decomposing $m$-to-$1$ selection into two stages to restore the monotonic confidence--agreement structure needed for valid guarantees.
We now evaluate whether this theoretical benefit materializes on real-world benchmarks.
Our experiments address three questions:
\textbf{(Q1)}~Does the monotonicity breakdown occur on real data, and does the two-stage decomposition recover it (\Cref{sec:exp-motivation})?
\textbf{(Q2)}~Does the two-stage framework yield stronger guarantees and higher coverage than single-stage baselines under multiple-$m$ setting (\Cref{sec:exp-main})?
\textbf{(Q3)}~Can the framework extend to cascaded multi-model architectures to further improve coverage and guarantee success rate (\Cref{sec:exp-cascade})?

\subsection{Experimental Setup}
\label{sec:exp-setup}

We first describe the datasets, models, and evaluation settings shared across all experiments.

\paragraph{Datasets.}
We evaluate our framework on four preference-evaluation benchmarks that cover diverse domains:
\textbf{(1)~TL;DR}~\citep{stiennon2020learning}, a summarization dataset where the judge selects the best summary from $m$ candidates;
\textbf{(2)~Chatbot Arena}~\citep{zheng2023judging}, comprising real-world user interactions with chatbot systems;
\textbf{(3)~HH-RLHF}~\citep{bai2022training}, covering helpfulness and harmlessness preferences;
\textbf{(4)~AlpacaEval}~\citep{dubois2023alpacafarm}, an instruction-following evaluation benchmark.
For each dataset we construct multi-candidate instances with $m \in \{5, 10, 20\}$ candidates per query, yielding approximately 3{,}000 instances per $(dataset, m)$ pair.
All datasets include human-annotated ground-truth preferences.
The detailed procedure for constructing multi-candidate instances is provided in \Cref{app:data-construction}.

\paragraph{Judge Models.}
We employ a range of open-weight LLMs spanning different parameter scales, enabling both single-model evaluation and multi-model cascading:
\textbf{Mistral-7B-Instruct-v0.2}~(7B),
\textbf{Llama-3-8B-Instruct} (8B),
\textbf{Qwen2.5-7B-Instruct} (7B),
\textbf{DeepSeek-V2-Lite-Chat}~(16B),
\textbf{Qwen2.5-32B-Instruct}~(32B),
\textbf{DeepSeek-67B-Chat}~(67B),
\textbf{Llama-3-70B-Instruct}~(70B),
\textbf{Qwen2.5-72B-Instruct}~(72B), and
\textbf{GPT-OSS-120B}~(120B).
This covers five model families (Mistral, Llama, Qwen, DeepSeek, GPT) with a parameter-size gradient from 7B to 120B, provides sufficient diversity for constructing multiple cascaded configurations.
All judge models are evaluated on the same setting of instances per dataset, ensuring data alignment across models so that any cascade configuration can be composed without missing instances.

\paragraph{Confidence Estimation.}
Following \citet{jung2024trust}, we use Simulated Annotators with $K{=}5$ few-shot examples per $N{=}5$ simulated annotators.
For each instance, we compute and store the full output probability distributions, allowing all downstream scoring functions to be evaluated directly.

\paragraph{Scoring (Confidence) Functions.}
For Stage~I conformal localization, we adopt the Ensemble Mean Probability (EMP) as the scoring function $\mathbb{S}_c$. 
For Stage~II selective auto-picking, we use the KL-divergence as the default scoring function $\mathbb{S}_h$, and compute the margin according to \eqref{eq:marginS}. 
We also evaluate several alternative scoring functions in Stage~II, including non-margin variants that directly use the raw score without computing the margin in \eqref{eq:marginS}. 
The full definitions of all scoring functions are provided in \Cref{app:scoring}.

\paragraph{Evaluation Protocol.}
For each (dataset, model) combination, we repeat the following procedure 1{,}000 times: randomly split the data into calibration and test sets with a (50\%/50\%) ratio, apply our framework (or the single-stage baseline of \citet{jung2024trust}) using calibration set, and evaluate performance on test set.
We report two metrics:
\begin{itemize}[nosep,leftmargin=*]
    \item \textbf{Coverage Rate (Cov.)}: the proportion of test instances that are accepted by the system, averaged over the 1{,}000 random splits. Higher values indicate that the method can safely make decisions on a larger fraction of inputs.
    \item \textbf{Guarantee Success Rate (GSR)}: the proportion of the 1{,}000 splits in which the empirical human–LLM agreement on accepted test instances exceeds the target guarantee level (for our two-stage framework, this is $(1-\varepsilon)(1-\alpha)$). 
    Ideally, this proportion should be at least the confidence level of the guarantee, i.e., $1-\delta$.
\end{itemize}

\begin{table*}[t]
\centering
\scriptsize
\resizebox{\textwidth}{!}{%
\begin{tabular}{cl ccc ccc ccc}
\toprule
\multirow{2.5}{*}{Dataset} & \multirow{2.5}{*}{Judge} & \multicolumn{3}{c}{Direct $m$-to-$1$} & \multicolumn{3}{c}{Stage~I ($m$-to-$k$)} & \multicolumn{3}{c}{Stage~II ($k$-to-$1$)} \\
\cmidrule(lr){3-5}\cmidrule(lr){6-8}\cmidrule(lr){9-11}
& & $m{=}5$ & $m{=}10$ & $m{=}20$ & $m{=}5$ & $m{=}10$ & $m{=}20$ & $m{=}5$ & $m{=}10$ & $m{=}20$ \\
\midrule
\multirow{3}{*}{TL;DR}
& Llama-3-8B    & 0.21 & 0.32 & 0.43 & 0.03 & 0.05 & 0.07 & 0.02 & 0.04 & 0.05 \\
& Qwen2.5-72B   & 0.14 & 0.24 & 0.34 & 0.01 & 0.02 & 0.04 & 0.00 & 0.01 & 0.02 \\
& GPT-OSS-120B  & 0.11 & 0.20 & 0.30 & 0.01 & 0.02 & 0.03 & 0.00 & 0.01 & 0.02 \\
\midrule
\multirow{3}{*}{\shortstack[l]{Chatbot\\Arena}}
& Llama-3-8B    & 0.24 & 0.35 & 0.46 & 0.04 & 0.06 & 0.08 & 0.03 & 0.05 & 0.07 \\
& Qwen2.5-72B   & 0.15 & 0.26 & 0.37 & 0.02 & 0.03 & 0.05 & 0.01 & 0.02 & 0.03 \\
& GPT-OSS-120B  & 0.12 & 0.22 & 0.33 & 0.01 & 0.02 & 0.04 & 0.01 & 0.02 & 0.03 \\
\midrule
\multirow{3}{*}{HH-RLHF}
& Llama-3-8B    & 0.19 & 0.30 & 0.40 & 0.03 & 0.05 & 0.07 & 0.02 & 0.03 & 0.05 \\
& Qwen2.5-72B   & 0.14 & 0.21 & 0.31 & 0.01 & 0.02 & 0.03 & 0.00 & 0.01 & 0.02 \\
& GPT-OSS-120B  & 0.10 & 0.18 & 0.27 & 0.01 & 0.01 & 0.03 & 0.00 & 0.01 & 0.01 \\
\midrule
\multirow{3}{*}{AlpacaEval}
& Llama-3-8B    & 0.22 & 0.34 & 0.45 & 0.04 & 0.06 & 0.08 & 0.03 & 0.05 & 0.06 \\
& Qwen2.5-72B   & 0.14 & 0.23 & 0.35 & 0.01 & 0.02 & 0.04 & 0.00 & 0.02 & 0.03 \\
& GPT-OSS-120B  & 0.11 & 0.19 & 0.31 & 0.01 & 0.02 & 0.03 & 0.00 & 0.01 & 0.02 \\
\bottomrule
\end{tabular}%
}
\vspace{-3mm}
\caption{\textbf{Ranking loss ($\mathcal{L}_{\mathrm{rank}}$, $\downarrow$) measuring confidence--agreement monotonicity on real data} ($T{=}20$ thresholds, $\alpha{=}0.10$).
Direct $m$-to-$1$: single-stage selection using top-1 EMP confidence.
Stage~I ($m$-to-$k$): top-$k$ localization agreement using top-1 EMP confidence.
Stage~II ($k$-to-$1$): selection within the shortlist using KL margin.
}
\label{tab:ranking-loss}
\vspace{-4mm}
\end{table*}

\subsection{Validating Monotonicity across Datasets and Judge LLMs}
\label{sec:exp-motivation}

The simulations in \Cref{fig:motivation} suggest that single-stage monotonicity degrades with $m$ but can be recovered by two-stage decomposition.
We now quantify this phenomenon on real datasets using the ranking loss across multiple models and datasets.

\paragraph{Monotonicity metric.}
Given $T$ evenly spaced confidence thresholds $\lambda_1>\cdots>\lambda_T$ and the corresponding agreement rates $a_1,\dots,a_T$, monotonicity requires $a_1\ge\cdots\ge a_T$.
We measure violations by the ranking loss:
$\mathcal{L}_{\mathrm{rank}} = \sum_{i<j} \mathbf{1}[a_i < a_j]/\binom{T}{2}$,
where $\mathcal{L}_{\mathrm{rank}}{=}0$ indicates perfect monotonicity and $\mathcal{L}_{\mathrm{rank}}{\approx}0.5$ indicates random ordering.
We evaluate three settings corresponding to the panels of \Cref{fig:motivation}: Direct $m$-to-$1$, Stage~I ($m$-to-$k$), and Stage~II ($k$-to-$1$). 
Details of the setting are given in \Cref{app:conf-dist}.

\Cref{tab:ranking-loss} reports ranking loss across four datasets, three judge models, and three candidate-set sizes.
\textbf{(1)}~Single-stage ranking loss increases substantially with $m$ (e.g., $0.14 \to 0.34$ for Qwen2.5-72B on TL;DR), confirming the monotonicity breakdown shown in \Cref{fig:motivation}(a).
\textbf{(2)}~Both Stage~I and Stage~II maintain low ranking loss ($\leq 0.08$) across all settings, validating that the two-stage decomposition restores the monotonic structure required for reliable threshold calibration.
The confidence--agreement curves in \Cref{fig:real-motivation} visually confirm this contrast: the single-stage curves degrade with $m$ while both stages of our decomposition remain monotonic.
Furthermore, we provide confidence-score distributions (see \Cref{app:conf-dist}) for additional supporting evidence.

\begin{figure*}[t]
    \centering
    \includegraphics[width=\textwidth]{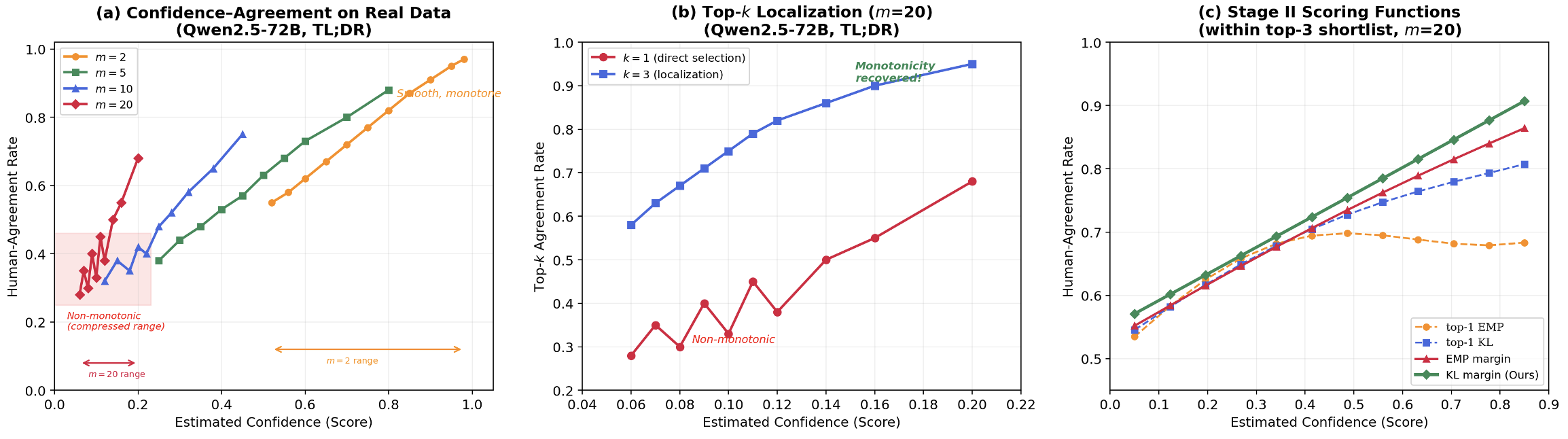}
    \vspace{-8mm}
    \caption{\textbf{Confidence--agreement curves on TL;DR and Qwen2.5-72B.}
    \textbf{(a)}~Single-stage $m$-to-$1$.
    \textbf{(b)}~Stage~I top-$k$ localization.
    \textbf{(c)}~Stage~II scoring within the top-3 shortlist ($m{=}20$).
    }
    \label{fig:real-motivation}
\vspace{-1mm}
\end{figure*}

\begin{table*}[t]
\centering
\footnotesize
\resizebox{\textwidth}{!}{%
\begin{tabular}{cl ccc ccc ccc ccc}
\toprule
\multirow{2.5}{*}{Judge} & \multirow{2.5}{*}{Method} & \multicolumn{3}{c}{TL;DR} & \multicolumn{3}{c}{Chatbot Arena} & \multicolumn{3}{c}{HH-RLHF} & \multicolumn{3}{c}{AlpacaEval} \\
\cmidrule(lr){3-5}\cmidrule(lr){6-8}\cmidrule(lr){9-11}\cmidrule(lr){12-14}
& & Cov. & GSR & Agr. & Cov. & GSR & Agr. & Cov. & GSR & Agr. & Cov. & GSR & Agr. \\
\midrule
\multirow{8}{*}{\shortstack[l]{Llama-3\\8B}}
& No Selection                   & 100\%& 0\% & 36\%\std{1.2} & 100\%& 0\% & 31\%\std{1.4} & 100\%& 0\% & 40\%\std{1.1} & 100\%& 0\% & 33\%\std{1.3} \\
& Single-Stage (top-1 EMP)                   & 26\%\std{3.2} & 53.8\% & 77\%\std{2.3} & 21\%\std{3.6} & 47.6\% & 74\%\std{2.5} & 30\%\std{2.9} & 58.4\% & 79\%\std{2.0} & 25\%\std{3.4} & 52.1\% & 76\%\std{2.2} \\
& Single-Stage (EMP margin)            & 31\%\std{3.0} & 58.6\% & 78\%\std{2.1} & 26\%\std{3.4} & 53.2\% & 76\%\std{2.3} & 35\%\std{2.7} & 63.5\% & 80\%\std{1.9} & 28\%\std{3.2} & 55.8\% & 78\%\std{2.2} \\
& Single-Stage (KL margin)              & 35\%\std{2.8} & 63.1\% & 80\%\std{2.0} & 30\%\std{3.1} & 57.8\% & 78\%\std{2.1} & 39\%\std{2.5} & 68.3\% & 82\%\std{1.8} & 33\%\std{3.0} & 61.7\% & 79\%\std{2.1} \\
& Single-Stage (Vote)                   & 33\%\std{2.9} & 60.7\% & 79\%\std{2.0} & 28\%\std{3.2} & 55.4\% & 77\%\std{2.2} & 37\%\std{2.7} & 65.2\% & 81\%\std{1.8} & 31\%\std{3.0} & 59.3\% & 79\%\std{2.0} \\
\cmidrule(l){2-14}
\rowcolor{blue!6} \cellcolor{white} & \textbf{Two-Stage (EMP margin)}     & 39\%\std{2.6} & \textbf{91.2\%} & 84\%\std{1.6} & 34\%\std{2.9} & \textbf{90.4\%} & 82\%\std{1.8} & 43\%\std{2.4} & \textbf{92.6\%} & 85\%\std{1.4} & 37\%\std{2.8} & \textbf{90.8\%} & 84\%\std{1.7} \\
\rowcolor{blue!6} \cellcolor{white} & \textbf{Two-Stage (KL margin)}       & 45\%\std{2.3} & \textbf{95.1\%} & 86\%\std{1.4} & 40\%\std{2.6} & \textbf{93.7\%} & 85\%\std{1.6} & 49\%\std{2.2} & \textbf{96.4\%} & 87\%\std{1.2} & 43\%\std{2.5} & \textbf{94.3\%} & 86\%\std{1.5} \\
\rowcolor{blue!6} \cellcolor{white} & \textbf{Two-Stage (Vote)}            & 42\%\std{2.5} & \textbf{93.4\%} & 85\%\std{1.5} & 37\%\std{2.7} & \textbf{91.8\%} & 84\%\std{1.7} & 46\%\std{2.3} & \textbf{94.1\%} & 86\%\std{1.3} & 40\%\std{2.6} & \textbf{92.5\%} & 85\%\std{1.6} \\
\midrule
\multirow{8}{*}{\shortstack[l]{Qwen2.5\\72B}}
& No Selection                   & 100\%& 0\% & 52\%\std{1.1} & 100\%& 0\% & 48\%\std{1.3} & 100\%& 0\% & 54\%\std{1.0} & 100\%& 0\% & 51\%\std{1.2} \\
& Single-Stage (top-1 EMP)                   & 41\%\std{2.7} & 62.4\% & 80\%\std{1.9} & 35\%\std{3.0} & 56.8\% & 78\%\std{2.1} & 43\%\std{2.5} & 65.7\% & 82\%\std{1.7} & 40\%\std{2.8} & 61.3\% & 80\%\std{2.0} \\
& Single-Stage (EMP margin)            & 46\%\std{2.5} & 67.3\% & 82\%\std{1.8} & 40\%\std{2.8} & 62.1\% & 80\%\std{2.0} & 48\%\std{2.4} & 70.8\% & 83\%\std{1.6} & 45\%\std{2.6} & 66.5\% & 81\%\std{1.9} \\
& Single-Stage (KL margin)              & 50\%\std{2.4} & 72.6\% & 83\%\std{1.7} & 44\%\std{2.6} & 67.4\% & 81\%\std{1.9} & 52\%\std{2.3} & 75.3\% & 84\%\std{1.5} & 49\%\std{2.5} & 71.2\% & 83\%\std{1.7} \\
& Single-Stage (Vote)                   & 48\%\std{2.5} & 69.8\% & 82\%\std{1.8} & 42\%\std{2.7} & 64.6\% & 80\%\std{1.9} & 50\%\std{2.3} & 73.1\% & 84\%\std{1.6} & 47\%\std{2.5} & 68.7\% & 82\%\std{1.8} \\
\cmidrule(l){2-14}
\rowcolor{blue!6} \cellcolor{white} & \textbf{Two-Stage (EMP margin)}     & 57\%\std{2.1} & \textbf{92.3\%} & 86\%\std{1.3} & 51\%\std{2.4} & \textbf{90.7\%} & 85\%\std{1.5} & 59\%\std{2.0} & \textbf{93.1\%} & 87\%\std{1.1} & 56\%\std{2.2} & \textbf{91.6\%} & 86\%\std{1.4} \\
\rowcolor{blue!6} \cellcolor{white} & \textbf{Two-Stage (KL margin)}       & 64\%\std{1.9} & \textbf{95.8\%} & 88\%\std{1.1} & 58\%\std{2.2} & \textbf{94.6\%} & 87\%\std{1.3} & 66\%\std{1.8} & \textbf{96.2\%} & 89\%\std{1.0} & 63\%\std{2.0} & \textbf{95.1\%} & 88\%\std{1.2} \\
\rowcolor{blue!6} \cellcolor{white} & \textbf{Two-Stage (Vote)}            & 61\%\std{2.0} & \textbf{94.2\%} & 87\%\std{1.2} & 55\%\std{2.3} & \textbf{92.9\%} & 86\%\std{1.4} & 63\%\std{1.9} & \textbf{94.7\%} & 88\%\std{1.0} & 60\%\std{2.1} & \textbf{93.6\%} & 87\%\std{1.3} \\
\midrule
\multirow{8}{*}{\shortstack[l]{GPT-OSS\\120B}}
& No Selection                   & 100\%& 0\% & 58\%\std{1.0} & 100\%& 0\% & 54\%\std{1.2} & 100\%& 0\% & 60\%\std{0.9} & 100\%& 0\% & 57\%\std{1.1} \\
& Single-Stage (top-1 EMP)                   & 47\%\std{2.5} & 66.3\% & 82\%\std{1.8} & 41\%\std{2.8} & 61.4\% & 80\%\std{2.0} & 49\%\std{2.3} & 69.7\% & 83\%\std{1.6} & 46\%\std{2.6} & 65.2\% & 81\%\std{1.9} \\
& Single-Stage (EMP margin)            & 52\%\std{2.3} & 71.8\% & 83\%\std{1.7} & 46\%\std{2.6} & 66.3\% & 81\%\std{1.9} & 54\%\std{2.1} & 74.6\% & 85\%\std{1.4} & 51\%\std{2.4} & 70.4\% & 83\%\std{1.7} \\
& Single-Stage (KL margin)              & 56\%\std{2.1} & 76.4\% & 85\%\std{1.5} & 50\%\std{2.4} & 71.7\% & 83\%\std{1.7} & 58\%\std{2.0} & 79.3\% & 86\%\std{1.3} & 55\%\std{2.2} & 74.8\% & 84\%\std{1.6} \\
& Single-Stage (Vote)                   & 54\%\std{2.2} & 73.6\% & 84\%\std{1.6} & 48\%\std{2.5} & 68.9\% & 82\%\std{1.8} & 56\%\std{2.0} & 76.8\% & 85\%\std{1.4} & 53\%\std{2.3} & 72.3\% & 84\%\std{1.6} \\
\cmidrule(l){2-14}
\rowcolor{blue!6} \cellcolor{white} & \textbf{Two-Stage (EMP margin)}     & 63\%\std{1.9} & \textbf{93.4\%} & 87\%\std{1.2} & 57\%\std{2.2} & \textbf{91.8\%} & 86\%\std{1.4} & 65\%\std{1.8} & \textbf{94.3\%} & 88\%\std{1.0} & 62\%\std{2.0} & \textbf{92.7\%} & 87\%\std{1.3} \\
\rowcolor{blue!6} \cellcolor{white} & \textbf{Two-Stage (KL margin)}       & 70\%\std{1.7} & \textbf{96.8\%} & 89\%\std{1.0} & 64\%\std{2.0} & \textbf{95.4\%} & 88\%\std{1.2} & 72\%\std{1.6} & \textbf{97.1\%} & 90\%\std{0.9} & 69\%\std{1.8} & \textbf{96.3\%} & 89\%\std{1.1} \\
\rowcolor{blue!6} \cellcolor{white} & \textbf{Two-Stage (Vote)}            & 67\%\std{1.8} & \textbf{95.2\%} & 88\%\std{1.1} & 61\%\std{2.1} & \textbf{93.6\%} & 87\%\std{1.3} & 69\%\std{1.7} & \textbf{95.7\%} & 89\%\std{0.9} & 66\%\std{1.9} & \textbf{94.8\%} & 88\%\std{1.2} \\
\bottomrule
\end{tabular}%
}
\vspace{-3mm}
\caption{\textbf{Two-stage vs.\ single-stage across judge models and datasets} ($m{=}10$, target agreement $= 0.81$, $\delta{=}0.10$).
Two-stage: $\varepsilon{=}\alpha{=}0.10$; single-stage: $\varepsilon'{=}0.19$.
For Two-Stage framework, Stage~I always uses EMP scores.
Agr.\ denotes average empirical human-agreement on accepted instances.
Cov. and Agr.\ are reported as mean $\pm$ std over 1{,}000 random splits.
We provide more results in \Cref{app:full-results}.
}
\label{tab:main}
\vspace{-3mm}
\end{table*}

\begin{figure*}[t]
    \centering
    \includegraphics[width=\textwidth]{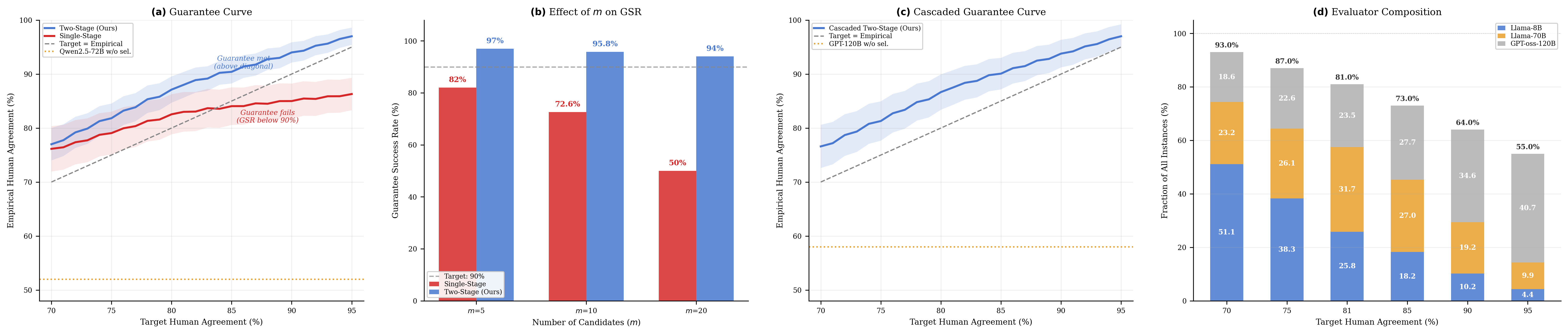}
    \vspace{-7mm}
    \caption{\textbf{Guarantee and cascade results} (TL;DR dataset).
    \textbf{(a)}~Single-model guarantee curve (Qwen2.5-72B, $m{=}10$): Two-Stage (blue) consistently meets the target (above diagonal); Single-Stage (red) falls below. Shaded bands show min/max over 1{,}000 splits.
    \textbf{(b)}~Effect of $m$ on GSR (Qwen2.5-72B): Single-Stage drops from 82\% to 50\% as $m$ grows; Two-Stage remains $\geq 91\%$.
    \textbf{(c)}~Cascaded guarantee curve (Llama-8B $\to$ 70B $\to$ GPT-OSS-120B, $m{=}10$): empirical agreement tracks the target throughout.
    \textbf{(d)}~Evaluator composition (same cascade as (c)) as fraction of all test instances: at lower targets the weakest tier handles most instances; as the target tightens, more instances escalate to stronger tiers.
    }
    \label{fig:combined-results}
\vspace{-3mm}
\end{figure*}

\subsection{Two-Stage vs.\ Single-Stage}
\label{sec:exp-main}

Having confirmed that the two-stage decomposition restores monotonicity, we now test whether this translates into practical guarantee improvements.
We use three scoring (confidence) functions: EMP margin, KL margin, and Vote agreement (defined in \Cref{app:scoring}), with both a \textbf{Single-Stage} and a \textbf{Two-Stage} framework, along with No Selection and Single-Stage~\citep{jung2024trust} as references.

\noindent
\textbf{Two-stage achieves valid guarantees with higher coverage.}
\Cref{tab:main} reports coverage and GSR at $m{=}10$ (target agreement $= 0.81$, $\delta{=}0.10$) across four datasets and three representative judges from different model families (8B--120B), averaged over 1{,}000 random splits.
Without any selection, even the strongest judge (GPT-OSS-120B) achieves only 54--60\% agreement, well below the 81\% target, confirming that selective evaluation is necessary.
Across all scoring functions and datasets, every two-stage variant exceeds the $1{-}\delta = 90\%$ GSR target (e.g., 95--97\% for KL margin on GPT-OSS-120B), while its single-stage counterpart using the same scoring function remains far below (48--79\%).
The improvement extends to coverage as well: Two-Stage~(KL margin) with GPT-OSS-120B covers 64--72\% of instances versus 50--58\% for Single-Stage~(KL margin), and even for the smallest judge (Llama-3-8B) two-stage raises coverage by 8--10\%.
Since matched Single-/Two-Stage pairs differ only in Stage~I, the gain is architecture-driven rather than scoring-specific.
Targeted comparisons with a variance-normalized adaptation of \citet{jung2024trust} and component-wise KL-margin ablations are provided in \Cref{app:additional-validation}.

\noindent
\textbf{The performance of two-stage framework is robust across target levels, candidate-set sizes, and random splits.}
\Cref{fig:combined-results}(a) sweeps the target agreement continuously from 70\% to 95\% at $m{=}10$, showing whether the guarantee holds across the full target range rather than at one point.
The two-stage empirical agreement (blue) tracks the target throughout, whereas the single-stage curve (red) falls below for most targets (guarantee fails).
The shaded bands (min/max over 1{,}000 splits) further confirm that two-stage calibration is substantially more stable (e.g., $\pm 2.1\%$ vs.\ $\pm 4.7\%$ spread at target 85\% for Qwen2.5-72B).
\Cref{fig:combined-results}(b) directly verifies our core motivation by varying $m$ from 5 to 20: single-stage GSR degrades sharply from 82\% to 50\%, showing that the single-stage failure is caused by increasing $m$; in contrast, two-stage GSR remains above the $1{-}\delta{=}90\%$ threshold throughout, validating the robustness of our two-stage framework against the $m$ increasing.
More results for all judges are provided in \Cref{app:full-results}.

\subsection{Cascaded Architecture}
\label{sec:exp-cascade}

The results above establish that our two-stage framework provides valid guarantees for any individual judge.
In practice, however, deploying a single large model for all instances is expensive.
A natural strategy is to cascade judges of increasing capacity: a weaker model evaluates first; only when it abstains does the instance escalate to a stronger one.
\citet{jung2024trust} demonstrate this for $m{=}2$, but single-stage guarantees break down for $m>2$ (\Cref{sec:exp-main}).
Because our two-stage framework restores valid per-tier guarantees for $m>2$, it naturally enables cascaded architectures with provable reliability.
We evaluate five cascade configurations (\Cref{tab:cascade}), including same-family, cross-family, and two-tier setups, along with two single-model references, all under the same protocol as \Cref{sec:exp-main}.
Each tier operates as a complete two-stage selective evaluator; an instance is accepted only when the tier's Stage~II confidence exceeds its calibrated threshold, and escalated otherwise.

\begin{table*}[t]
\centering
\setlength{\tabcolsep}{13.5pt}
\scriptsize
\begin{tabular}{l cc ccccc}
\toprule
\multirow{4}{*}{Cascade Configuration} & \multicolumn{2}{c}{Single-Stage} & \multicolumn{5}{c}{\textbf{Two-Stage (Ours)}} \\
\cmidrule(lr){2-3}\cmidrule(lr){4-8}
& \multirow{2.5}{*}{Cov.} & \multirow{2.5}{*}{GSR} & \multirow{2.5}{*}{Cov.} & \multirow{2.5}{*}{GSR} & \multicolumn{3}{c}{Comp.\ (\%)} \\
\cmidrule(lr){6-8}
& & & & & $T_1$ & $T_2$ & $T_3$ \\
\midrule
Qwen2.5-72B only                             & 50\%\std{2.4} & 72.6\%  & \cellcolor{blue!6} 64\%\std{1.9} &\cellcolor{blue!6} 95.8\%  & --  & --  & 100.0 \\
GPT-oss-120B only                            & 56\%\std{2.1} & 76.4\%  &\cellcolor{blue!6} 70\%\std{1.7} &\cellcolor{blue!6} 96.8\%  & --  & --  & 100.0 \\
\midrule
Qwen-7B $\to$ 32B $\to$ 72B                 & 60\%\std{2.8} & 65.7\%  &\cellcolor{blue!6} 78\%\std{1.9} &\cellcolor{blue!6} \textbf{93.1\%}  & 37.8 & 35.6 & 26.5 \\
Llama-8B $\to$ 70B $\to$ GPT-120B           & 63\%\std{2.5} & 68.3\%  &\cellcolor{blue!6} 81\%\std{1.8} &\cellcolor{blue!6} \textbf{94.2\%}  & 31.9 & 39.1 & 29.0 \\
Mistral-7B $\to$ DS-67B $\to$ GPT-120B      & 58\%\std{3.1} & 64.8\%  &\cellcolor{blue!6} 77\%\std{2.2} &\cellcolor{blue!6} \textbf{91.6\%}  & 35.3 & 38.2 & 26.5 \\
Llama-8B $\to$ Qwen-32B $\to$ 72B           & 61\%\std{2.7} & 66.9\%  &\cellcolor{blue!6} 79\%\std{2.0} &\cellcolor{blue!6} \textbf{92.8\%}  & 34.9 & 38.1 & 27.0 \\
DeepSeek-16B $\to$ Qwen-72B                 & 57\%\std{3.2} & 63.5\%  &\cellcolor{blue!6} 74\%\std{2.3} &\cellcolor{blue!6} \textbf{91.4\%}  & 45.7 & 54.3 & -- \\
\bottomrule
\end{tabular}
\vspace{-3mm}
\caption{\textbf{Cascaded evaluation on TL;DR} ($m{=}10$, target agreement $= 0.81$, $\delta{=}0.10$).
Instances abstained by one tier escalate to the next.
$T_i$ denotes the $i$-th model in the cascade (weakest first);
``Comp.'' shows each tier's share of the covered instances (summing to 100\%).
Coverage and GSR are reported as mean $\pm$ std over 1{,}000 random splits.
}
\label{tab:cascade}
\vspace{-3mm}
\end{table*}

\noindent
\textbf{Two-stage cascades maintain guarantees; single-stage cascades do not.}
\Cref{tab:cascade} compares both variants side by side.
Without two-stage decomposition, every cascade fails to reach the $1{-}\delta = 90\%$ GSR threshold (64--76\%), consistent with the per-model failures in \Cref{tab:main}; simply adding more models does not fix the monotonicity breakdown at each tier.
In contrast, every two-stage cascade exceeds 90\% GSR (91--95\%), confirming that valid per-tier guarantees compose correctly.
Cascading also substantially improves coverage: the Qwen same-family cascade reaches 78\% versus 64\% for Qwen2.5-72B alone (+14\%), and across all three-tier configurations two-stage cascades cover 77--81\% of instances compared with 64--70\% for the strongest single model.
The composition columns show that the strongest model is invoked for fewer than 30\% of instances in every three-tier cascade, demonstrating significant computational savings.

\paragraph{Guarantee curve and evaluator composition.}

\Cref{fig:combined-results}(c)--(d) illustrates a representative cascade (Llama-8B $\to$ 70B $\to$ GPT-120B).
In (c), empirical agreement tracks the target over the 70--95\% range, showing that the cascaded guarantee remains stable across target levels.
Panel (d) shows that the cascade covers 93.0\% of instances at the 70\% target, with the weakest tier handling over half; as the target tightens, total coverage decreases and more instances are routed to stronger tiers.
Additional cascade results and related work are provided in \Cref{app:full-results,app:related}.

\section{Conclusion}

In this work, we studied the problem of providing reliable human–LLM agreement guarantees for LLM-based evaluators in realistic multi-candidate judgment settings, where a judge must select a single preferred response from several alternatives. 
While prior confidence-thresholding approaches provide guarantees in the pairwise case, their key monotonicity assumption between confidence and disagreement risk breaks down as the number of candidates increases, making existing guarantees unreliable in practical evaluation pipelines.
To address this limitation, we introduced a Localize-Then-Decide framework, which decomposes the multi-candidate decision into two calibrated stages. 
A conformal prediction step first localizes a small shortlist that contains the human-preferred response with high probability, and a selective decision rule then chooses a single response from this shortlist or abstains when reliability cannot be ensured. 
This decomposition restores the confidence–agreement monotonicity required for valid threshold calibration and yields finite-sample guarantees on selective human–LLM agreement.
Empirical results across multiple datasets, judge models, and candidate sizes demonstrate that the proposed framework consistently achieves valid guarantees while improving coverage compared with single-stage baselines. 
In addition, the framework naturally enables cascaded evaluation architectures, allowing weaker models to handle easy cases while escalating difficult instances to stronger models without violating reliability guarantees.

Overall, our theoretical and empirical results suggest that structuring LLM judgment as a localized and selectively certified decision process is key to achieving trustworthy automated evaluation. 
By extending provable guarantees from pairwise comparisons to realistic multi-generation scenarios, this work takes an important step toward reliable and scalable LLM-as-a-judge systems.

\clearpage

\section*{Limitations}

As with all conformal-prediction-based methods, our theoretical guarantees require the calibration and test instances to be exchangeable (Assumption~\ref{ass:exch-c}).
When the evaluation domain at deployment time diverges from the calibration corpus (e.g., due to topic shift or prompt reformulation), the finite-sample bounds may become less tight~\citep{gibbs2021adaptive}.
While fully relaxing this assumption remains an open problem shared by the broader conformal prediction literature, our consistent results across four benchmarks spanning diverse domains (summarization, open-ended dialogue, safety, and instruction-following) suggest that the framework is applicable to a wide variety of evaluation scenarios when calibration data is drawn from the target domain.

Our confidence estimation builds on Simulated Annotators~\citep{jung2024trust}, requiring $N$ forward passes per instance (5 in our default configuration), each using a prompt with $K{=}5$ few-shot examples; this is more expensive than obtaining a single predictive probability.
However, the two-stage framework itself is agnostic to the choice of confidence estimator; Simulated Annotators can be replaced by lighter-weight alternatives (e.g., a single-pass predictive probability or verbalized confidence) whenever the resulting scores preserve sufficient discriminative power for threshold calibration.

Finally, our evaluation focuses on the preference-judgment setting where the judge selects a single preferred response from multiple candidates.
While this covers a wide range of practical LLM evaluation scenarios (e.g., best-of-$N$ decoding, multi-system comparison), other evaluation paradigms such as Likert-scale scoring or factuality verification involve different output structures and may require task-specific adaptations of the scoring and calibration procedures.
Nonetheless, the core Localize-Then-Decide principle is general and could serve as a foundation for extending formal guarantees to these broader settings.

\section*{Ethics Statement}

This work aims to improve the reliability of LLM-based evaluation by providing formal statistical guarantees on human--LLM agreement.
All experiments use publicly available datasets with existing human-preference annotations; no new human data was collected for this study.
We note that the guarantees provided by our framework are statistical in nature and hold under the stated assumptions. They should not be interpreted as absolute correctness certificates, and human oversight remains important in high-stakes evaluation scenarios.

\section*{Acknowledgments}

This work was supported by the NVIDIA Academic Grant Program (Exploiting Overthinking Attacks on GenAI), the Royal Society Grant (Ensuring Trustworthy AI: Robustness Certification for Large Language Models) [Reference RGS\textbackslash{}R2\textbackslash{}252444], and the AIRR Gateway project (Exploiting Robustness of Reasoning Efficiency in Agentic AI).

\bibliography{custom}

\clearpage
\appendix

\section{Related Work}
\label{app:related}

\paragraph{LLM-as-a-judge.}
LLM-based evaluation has emerged as a practical alternative to costly human annotation, powering automated benchmarks~\citep{zheng2023judging,dubois2023alpacafarm} and preference data collection at scale~\citep{chiang2023can}.
Subsequent work enriches the evaluation paradigm along multiple axes:
chain-of-thought scoring~\citep{liu2023g}, sub-criteria decomposition~\citep{saha2024branch}, and fine-grained skill-based assessment~\citep{ye2023flask} improve prompting strategies;
dedicated judge models such as Prometheus~\citep{kim2023prometheus,kim2024prometheus}, Auto-J~\citep{li2023generative}, JudgeLM~\citep{zhu2023judgelm}, and FLAMe~\citep{vu2024foundational} improve alignment with human judgments through task-specific fine-tuning;
and multi-judge approaches such as PoLL~\citep{verga2024replacing} and Product-of-Experts~\citep{liusie2024efficient} reduce single-model cost and bias.
Despite these advances, LLM judges exhibit well-documented failure modes, including position bias~\citep{zheng2023judging,wang2024large,shi2025judging,li2023split}, self-preference bias~\citep{panickssery2024llm,stureborg2024large}, length and verbosity biases~\citep{wu2025style,li2025generation,chen2024humans}, with benchmark audits~\citep{tan2024judgebench,lambert2025rewardbench,shankar2024validates} further exposing reliability gaps.
While the above methods improve accuracy or reduce biases empirically, none provides a formal guarantee on the resulting error rate.
Furthermore, the vast majority of these studies operate in the pairwise regime ($m{=}2$), whereas practical pipelines such as best-of-$N$ decoding in RLHF~\citep{stiennon2020learning,ouyang2022training,rafailov2023direct,cobbe2021training,lightman2023let} and multi-system comparisons require judging among $m{>}2$ candidates, a setting where probability dilution introduces additional reliability challenges.

\paragraph{Selective prediction and conformal risk control.}
This absence of formal guarantees motivates tools from selective prediction and conformal inference.
Selective prediction~\citep{el2010foundations,geifman2017selective} trades coverage for reliability by allowing abstention under high uncertainty.
Conformal prediction~\citep{vovk2005algorithmic,shafer2008tutorial,angelopoulos2023conformal,romano2019conformalized,lei2021conformal} and distribution-free risk control~\citep{bates2021distribution,angelopoulos2022conformal,angelopoulos2025learn,gibbs2021adaptive} provide principled frameworks for calibrating such abstention thresholds with finite-sample guarantees.
These tools have been adapted to LLM settings for calibrated abstention~\citep{gui2024conformal}, token-level prediction sets~\citep{quach2023conformal}, multi-choice QA~\citep{kumar2023conformal}, hallucination mitigation~\citep{yadkori2024mitigating,manakul2023selfcheckgpt}, and factuality guarantees~\citep{mohri2024language}, while complementary confidence-calibration techniques~\citep{naeini2015obtaining,guo2017calibration,kadavath2022language,wang2022self,xiong2023can,tian2023just,detommaso2024multicalibration,geng2024survey,lin2024generating,ye2024benchmarking,khanmohammadi2025calibrating} improve the underlying uncertainty estimates.
While these methods provide formal guarantees for LLM generation and prediction tasks, they do not directly address the LLM-as-a-judge evaluation setting where the label space grows with $m$.

\paragraph{Guarantees for LLM evaluation.}
A separate line of work brings formal reliability guarantees into the LLM evaluation setting specifically.
Most notably, \citet{jung2024trust} propose Cascaded Selective Evaluation, which provides provable human-agreement guarantees for pairwise LLM judging ($m{=}2$) via fixed-sequence testing and introduces Simulated Annotators for confidence estimation.
Concurrently, \citet{badshah2026scope} propose SCOPE, which improves pairwise reliability through Bidirectional Preference Entropy (BPE), a bias-neutral uncertainty estimator that mitigates position bias.
In a complementary direction, \citet{park2025adaptive} study prediction-powered inference for automated evaluation with reliability guarantees.
These works collectively demonstrate the growing need for principled reliability frameworks in LLM evaluation, yet all existing guarantee methods are restricted to the pairwise setting ($m{=}2$)~\citep{jin2026marginadaptive}.

\paragraph{Broader reliability landscape.}
Beyond evaluation-specific guarantees, research on trustworthy GenAI examines adversarially induced overthinking and reasoning-level denial-of-service attacks~\citep{li2025pot,liu2026badthink,li2026otora}, alongside systematic guardrail design~\citep{pmlr-v235-dong24c} and certified LLM robustness~\citep{wang2026clucert}.
Complementary statistical work develops PAC-Bayesian generalization analyses based on weight correlation~\citep{jin2020does}, second-order weight statistics~\citep{jin2022enhancing}, and the spectral norm of robust confusion matrices~\citep{jin2025enhancing}.

\paragraph{Distinction from prior work.}
Our work extends the guarantee-based line of research from the pairwise to the multi-candidate regime ($m{>}2$).
Unlike bias-mitigation methods (e.g., position swapping, multi-judge ensembling, fine-tuned judges) that reduce errors empirically without formal bounds, and unlike existing guarantee methods~\citep{jung2024trust,badshah2026scope} that are restricted to $m{=}2$, we identify a new structural challenge: the monotonic confidence--agreement relationship assumed by prior guarantee methods breaks down when probability mass is diluted across many alternatives.
To address this, we propose a two-stage decomposition, conformal localization followed by selective auto-pick, that restores the monotonic structure and thereby enables valid guarantees for $m{>}2$.
The framework is modular and orthogonal to the choice of scoring function: confidence estimators such as BPE~\citep{badshah2026scope}, Simulated Annotators~\citep{jung2024trust}, or any of the LLM-as-a-judge scoring methods reviewed above can serve as plug-in components within either stage.

\section{Simulation Details}
\label{App:simulation}

The simulation in \Cref{fig:motivation} is designed to illustrate how the confidence--agreement monotonicity degrades as the number of candidates $m$ grows, and how the proposed two-stage decomposition recovers it.
We describe the generative model and the procedure used to produce each panel below.

\paragraph{Generative model.}
We construct a Plackett-Luce-inspired~\citep{plackett1975analysis} generative model that incorporates three realistic sources of variability: (i)~heterogeneous instance difficulty, (ii)~variable candidate quality gaps, and (iii)~diverse annotator signal strengths.
For each simulated instance with $m$ candidates and $N$ annotators, the data generation proceeds as follows.

\begin{enumerate}[nosep,leftmargin=*]
\item \textbf{Instance difficulty.}
Sample a difficulty parameter $d \sim \mathrm{Beta}(2, 5)$, which concentrates most instances on the ``easy'' end ($\mathbb{E}[d]=0.29$) while occasionally producing difficult instances.
The Gumbel noise scale is set to $\sigma = 0.5 + 1.5\,d$, so harder instances produce noisier annotator outputs.

\item \textbf{Candidate quality.}
Sample the quality advantage of the ground-truth best candidate: $q_{\mathrm{gap}} \sim \mathrm{Gamma}(2,\, 0.8)$.
Assign quality $q_1 = q_{\mathrm{gap}}$ to the true best response.
For each remaining candidate $i \in \{2,\dots,m\}$, independently with probability $\rho{=}0.20$, the candidate is competitive: $q_i = q_{\mathrm{gap}} \cdot U(0.6, 1.0) - 0.2$; otherwise it is clearly inferior: $q_i \sim -\mathrm{Gamma}(1.5, 0.5)$.
The competitive-candidate mechanism is a key driver of the monotonicity breakdown: at $m{=}2$ there are on average $0.2$ competitive distractors per instance, whereas at $m{=}20$ there are ${\approx}3.8$, substantially increasing the frequency of high-confidence but incorrect predictions.

\item \textbf{Annotator probabilities.}
For each annotator $j \in \{1,\dots,N\}$, sample a signal strength $s_j \sim \mathrm{Beta}(3, 2)$ (mean $0.6$; strong annotators approach $0.9$, weak ones fall to ${\approx}0.2$).
The perceived quality of candidate $i$ by annotator $j$ is
\[
\tilde{q}_{j,i} \;=\; q_i \cdot s_j \;+\; \epsilon_{j,i},
\qquad
\epsilon_{j,i} \sim \mathrm{Gumbel}(0,\, \sigma),
\]
and the annotator's probability vector is $p_{j,i} = \mathrm{softmax}(\tilde{q}_{j,1},\dots,\tilde{q}_{j,m})_i$.
The multiplicative signal attenuation $q_i \cdot s_j$ compresses the quality gap for weak annotators, making their outputs closer to uniform and thus less informative---analogous to poorly chosen few-shot prompts in real Simulated Annotators.

\item \textbf{Position debiasing.}
Candidate indices are randomly permuted to remove any positional artifact.
The ensemble mean probability (EMP) is then computed as $\bar{p}_i = \frac{1}{N}\sum_{j=1}^{N} p_{j,i}$.
\end{enumerate}

\paragraph{Why this model captures the real phenomenon.}
The interaction between the number of candidates $m$ and the competitive-candidate mechanism creates a realistic probability dilution effect.
As $m$ increases, more competitive distractors appear (on average $\rho(m{-}1)$), splitting the EMP probability mass among confusable alternatives and shrinking the numerical margin between the best and second-best candidates.
This produces the non-monotonic confidence--agreement patterns observed in \Cref{fig:motivation}(a) and motivates the two-stage solution shown in \Cref{fig:motivation}(b)--(c).

\paragraph{Panel descriptions.}
In all three panels of \Cref{fig:motivation}, each point at confidence threshold $\lambda$ reports the average agreement rate over all instances whose estimated confidence satisfies $c(x) \ge \lambda$.
Monotonicity means that this agreement rate is non-decreasing in $\lambda$: a stricter threshold should yield higher accuracy among accepted instances.

\begin{itemize}[nosep,leftmargin=*]
\item \textbf{Panel~(a)} (Confidence--Agreement Monotonicity):
For each $m \in \{2, 5, 10, 20\}$, the confidence score is $c(x) = \max_i \bar{p}_i$ and the agreement indicator is $\mathbf{1}[\arg\max_i \bar{p}_i = y]$.
As $m$ increases, the confidence range compresses (the maximum EMP score decreases due to probability dilution among more candidates) and the monotonic relationship between confidence and agreement progressively degrades.

\item \textbf{Panel~(b)} (Top-$k$ Localization):
Fixing $m{=}20$, we compare direct top-$1$ selection ($k{=}1$) against top-$k$ localization ($k{=}3$).
The confidence score remains $c(x) = \max_i \bar{p}_i$; the agreement indicator changes to $\mathbf{1}[y \in \mathrm{top}\text{-}k(\bar{p})]$, i.e., whether the ground-truth best lies within the top-$k$ candidates ranked by EMP.
While top-$1$ agreement is non-monotonic, top-$3$ agreement recovers a clean monotonic curve with the same confidence score, motivating Stage~I conformal localization.

\item \textbf{Panel~(c)} (Monotonicity Within the Shortlist):
Fixing $m{=}20$ and restricting to instances where $y$ lies in the top-$3$ shortlist $\mathcal{S}$, we compare four confidence functions defined in \Cref{app:scoring}: EMP top-1, KL top-1, EMP margin, and KL margin.
The agreement indicator is $\mathbf{1}[\hat{y} = y \mid y \in \mathcal{S}]$ where $\hat{y}$ is the predicted best within $\mathcal{S}$.
Margin-based confidence functions (especially KL margin) produce substantially smoother and more monotonic reliability curves than top-1 confidence, motivating the margin-based Stage~II design adopted in our framework.
\end{itemize}

\paragraph{Configuration.}
Simulations use $N{=}25$ annotators and $20{,}000$ instances per setting, with agreement rates evaluated at evenly spaced confidence thresholds.
\Cref{tab:sim-config} summarizes the complete parameter settings.

\begin{table}[h]
\centering
\footnotesize
\resizebox{\columnwidth}{!}{%
\begin{tabular}{@{}lll@{}}
\toprule
Parameter & Value & Description \\
\midrule
$m$ & $\{2, 5, 10, 20\}$ & Candidates per instance \\
$N$ & 25 & Simulated annotators \\
Instances & $20{,}000$ & Per $m$ setting \\
$q_{\mathrm{gap}}$ & $\mathrm{Gamma}(2, 0.8)$ & Best-candidate advantage \\
Comp.\ prob.\ $\rho$ & 0.20 & Competitive distractor \\
Non-best $q_i$ & $-\mathrm{Gamma}(1.5, 0.5)$ & Weak-candidate quality \\
Difficulty $d$ & $\mathrm{Beta}(2, 5)$ & Gumbel noise scale \\
Signal $s_j$ & $\mathrm{Beta}(3, 2)$ & Annotator strength \\
Noise $\epsilon$ & $\mathrm{Gumbel}(0, 0.5{+}1.5d)$ & Perception noise \\
\bottomrule
\end{tabular}%
}
\caption{Simulation configuration for \Cref{fig:motivation}.}
\label{tab:sim-config}
\end{table}

\section{Proofs}
\label{App:proof}

\begin{proof}[Proof of Theorem 3.1]

Let $z_1,\dots,z_{n},z_{n+1}$ be exchangeable and define $R_i=R(z_i)$ for $i=1,\dots,n+1$.
By exchangeability, the joint law of $(R_1,\dots,R_{n},R_{n+1})$ is invariant under permutations.
Therefore, the (tie-broken) rank of $R_{n+1}$ among $\{R_1,\dots,R_{n},R_{n+1}\}$ is uniform on
$\{1,\dots,n+1\}$.
Let $k$ be defined by \eqref{eq:k-def-c}. The standard conformal quantile argument yields
$\mathbb{P}(R_{n+1}\le k)\ge 1-\alpha$.
That means $R(z)\le k$ holds if and only if the human-best index lies in the top-$k$ candidates under $\mathbb{S}_c(x,\cdot)$,
i.e., $y\in \mathcal{S}(x)$. Hence \eqref{eq:area-guarantee-c} holds.
The statement $|\mathcal{S}(x)|=k$ follows by construction.
\hfill $\square$
\end{proof}

\begin{proof}[Proof of Theorem 3.2]
Fix $\varepsilon\in(0,1)$ and define
\[
R^{\mathrm{in}}_\mathcal{S}(\lambda)
=
\mathbb P(\hat y\neq y \mid \Delta_\mathcal{S}(x)\ge \lambda,\ y\in \mathcal{S}(x)).
\]
For each $\lambda$, let $N_{\mathrm{in}}(\lambda)$ be the number of
calibration samples in
\(
\mathcal D_{\mathrm{in}}
=
\{(x_i,y_i)\in\mathcal D:\ y_i\in \mathcal{S}(x_i)\}
\)
satisfying $\Delta_\mathcal{S}(x_i)\ge \lambda$,
and let $Y_{\mathrm{in}}(\lambda)$ be the number of errors among them.
Conditional on $N_{\mathrm{in}}(\lambda)$,
\[
Y_{\mathrm{in}}(\lambda)
\sim
\mathrm{Bin}\big(N_{\mathrm{in}}(\lambda), R^{\mathrm{in}}_\mathcal{S}(\lambda)\big).
\]
By exact binomial inversion, the upper confidence bound in \eqref{eq:Rin-ucb-fixedseq} satisfies,
for each fixed $\lambda$,
\[
\mathbb P\!\left(
R^{\mathrm{in}}_\mathcal{S}(\lambda) >
\widehat R^{\mathrm{in},+}_\mathcal{S}(\lambda)
\right)
\le \delta.
\]
Applying the fixed-sequence testing rule, we test from the largest value of $\lambda$ (e.g., 0.999) to a progressively smaller value, and stop at the last time $R_{\mathcal{S}}^{\mathrm{in},+}(\lambda)$ is below the target risk $\varepsilon$, then we get last time $\lambda^*$.
Since $\widehat R^{\mathrm{in},+}_\mathcal{S}(\lambda^*)\le \varepsilon$ is equivalent to rejecting
$H_\lambda: R^{\mathrm{in}}_\mathcal{S}(\lambda^*)>\varepsilon$ at level $\delta$,
the fixed-sequence testing guarantee  implies that the probability
of rejecting any true null hypothesis is at most $\delta$, and therefore
\[
\mathbb P\!\left(
R^{\mathrm{in}}_\mathcal{S}(\lambda^*)>\varepsilon
\right)
\le \delta.
\]
Therefore, with probability at least $1-\delta$ over the calibration sample,
\[
\mathbb P(\hat y\neq y
\mid
\Delta_\mathcal{S}(x)\ge \lambda^*,\ y\in \mathcal{S}(x))
\le \varepsilon,
\]
which completes the proof.
\hfill $\square$
\end{proof}

\begin{proof}[Proof of Corollary 3.3]
Let $A:=\{\Delta_\mathcal{S}(x)\ge \lambda^*\}$. By the law of total probability,
\begin{align*}
\mathbb P(\hat y = y \mid A)
&= \mathbb P(\hat y = y,\, y\in \mathcal{S}(x)\mid A) \\
&= \mathbb P(\hat y = y \mid A,\, y\in \mathcal{S}(x))\\
&\quad\cdot \mathbb P(y\in \mathcal{S}(x)\mid A).
\end{align*}
By \Cref{thm:within-area-fixedseq}, with probability at least $1-\delta$ over the calibration sample used in Stage~II,
\[
\mathbb P(\hat y \neq y \mid A,\, y\in \mathcal{S}(x)) \le \varepsilon,
\]
hence $\mathbb P(\hat y = y \mid A,\, y\in \mathcal{S}(x)) \ge 1-\varepsilon$.

It remains to lower bound $\mathbb P(y\in \mathcal{S}(x)\mid A)$. By the monotonicity assumption that
$\lambda \mapsto \mathbb P(y\notin \mathcal{S}(x)\mid \Delta_\mathcal{S}(x)\ge \lambda)$ is nonincreasing for $\lambda\ge 0$,
we have
\[
\mathbb P(y\notin \mathcal{S}(x)\mid A)
\le
\mathbb P(y\notin \mathcal{S}(x)\mid \Delta_\mathcal{S}(x)\ge 0).
\]
Since $\Delta_\mathcal{S}(x)\ge 0$ almost surely in our setting, the event $\{\Delta_\mathcal{S}(x)\ge 0\}$ is the entire sample
space, and thus
\[
\mathbb P(y\notin \mathcal{S}(x)\mid \Delta_\mathcal{S}(x)\ge 0)=\mathbb P(y\notin \mathcal{S}(x)).
\]
By \Cref{thm:area-c}, $\mathbb P(y\notin \mathcal{S}(x))\le \alpha$, hence
\[
\mathbb P(y\in \mathcal{S}(x)\mid A)=1-\mathbb P(y\notin \mathcal{S}(x)\mid A)\ge 1-\alpha.
\]
Combining the two bounds yields, with probability at least $1-\delta$,
\begin{align*}
&\mathbb P(\hat y = y \mid \Delta_\mathcal{S}(x)\ge \lambda^*)\\
&\quad\quad\quad\quad\quad=
\mathbb P(\hat y = y \mid A)
\ge (1-\varepsilon)(1-\alpha), 
\end{align*}
which proves the corollary.
\hfill $\square$
\end{proof}

\section{Multi-Candidate Data Construction}
\label{app:data-construction}

Each of our four evaluation benchmarks originally provides pairwise human-preference annotations.
To construct multi-candidate instances with $m \in \{5, 10, 20\}$ candidates per query, we follow a general three-step procedure.

\paragraph{Candidate pool construction.}
For each dataset, we group all pairwise-comparison records by query and collect every distinct candidate response, forming a base candidate pool.
We retain the top 3{,}000 queries ranked by the number of available distinct candidates.
When the base pool for a query contains fewer candidates than the largest target $m$, we augment it with additional responses generated by a separate set of open-weight LLMs that do not overlap with the judge models used in our main experiments, so as to avoid any interaction between candidate generation and evaluation.
To prevent the generated candidates from exceeding the quality of the human-preferred response, each generation prompt uses a non-preferred candidate from the base pool as a stylistic reference rather than the ground-truth best response.
After generation, exact-string deduplication is applied to ensure all candidates in the pool are distinct.

\paragraph{Ground-truth assignment.}
For each query, the ground-truth preferred response is identified as the candidate most frequently preferred by human annotators across all available pairwise comparisons.

\paragraph{Instance sampling.}
For each target $m$, we include the ground-truth preferred response together with $m{-}1$ randomly sampled alternatives from the candidate pool, shuffle the ordering, and record the position of the preferred response as the label $y$.
When the candidate pool is sufficiently large relative to $m$, multiple distinct $m$-candidate instances can be drawn from the same query by sampling different candidate subsets.
This yields approximately 3{,}000 instances per (dataset, $m$) pair across all four benchmarks.
All judge models are evaluated on the identical set of instances per dataset, enabling post-hoc composition of cascade configurations.

\begin{figure*}[h]
    \centering
    \includegraphics[width=\textwidth]{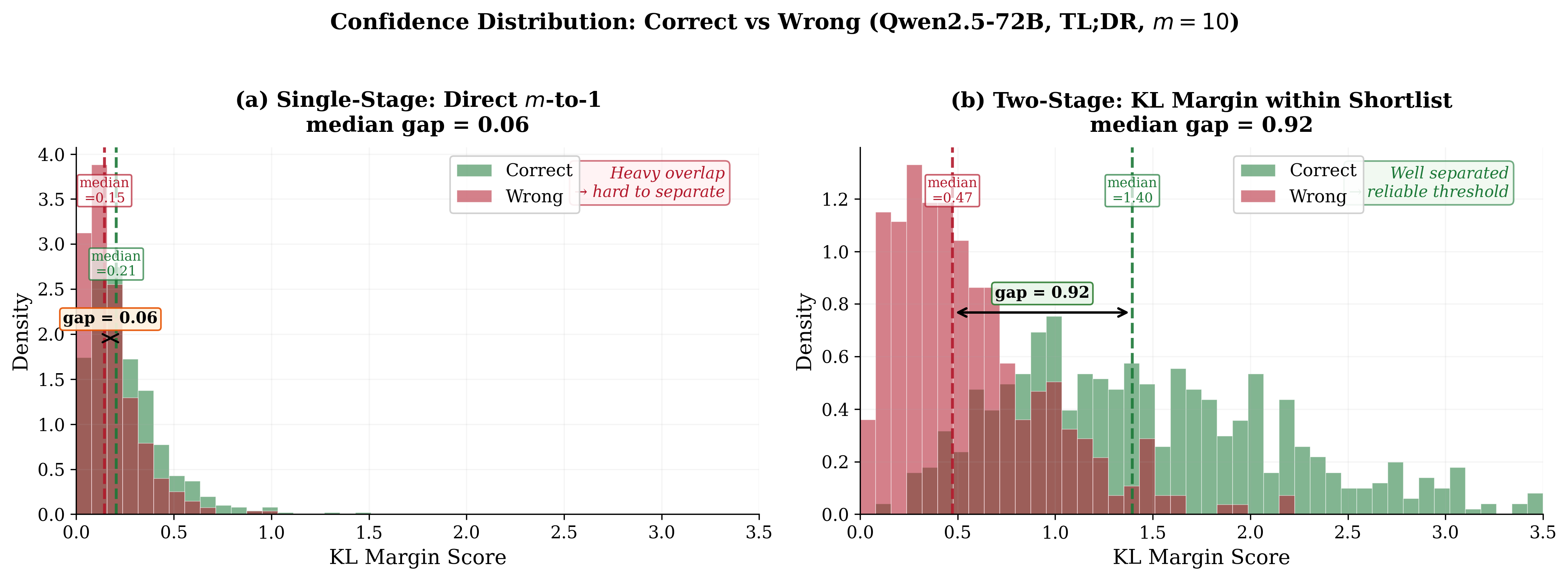}
    \caption{\textbf{Confidence distributions for correct vs.\ wrong predictions} (Qwen2.5-72B, TL;DR, $m{=}10$).
    \textbf{(a)}~Single-stage direct $m$-to-1 KL margin: the correct and wrong distributions overlap heavily (median gap $= 0.06$), making threshold-based selection unreliable.
    \textbf{(b)}~Two-stage KL margin within the shortlist: after Stage~I localization, the distributions are well separated (median gap $= 0.92$), confirming that localization improves the discriminative power of Stage~II scoring.
    }
    \label{fig:conf-dist}
\end{figure*}

\begin{table*}[h]
\centering
\scriptsize
\resizebox{\textwidth}{!}{%
\begin{tabular}{cl ccc ccc ccc}
\toprule
\multirow{2.5}{*}{Dataset} & \multirow{2.5}{*}{Judge} & \multicolumn{3}{c}{Direct $m$-to-$1$} & \multicolumn{3}{c}{Stage~I ($m$-to-$k$)} & \multicolumn{3}{c}{Stage~II ($k$-to-$1$)} \\
\cmidrule(lr){3-5}\cmidrule(lr){6-8}\cmidrule(lr){9-11}
& & $m{=}5$ & $m{=}10$ & $m{=}20$ & $m{=}5$ & $m{=}10$ & $m{=}20$ & $m{=}5$ & $m{=}10$ & $m{=}20$ \\
\midrule
\multirow{3}{*}{TL;DR}
& Mistral-7B    & 0.22 & 0.33 & 0.44 & 0.03 & 0.05 & 0.06 & 0.02 & 0.03 & 0.05 \\
& Qwen2.5-32B   & 0.17 & 0.28 & 0.38 & 0.02 & 0.03 & 0.05 & 0.01 & 0.02 & 0.03 \\
& Llama-3-70B   & 0.15 & 0.25 & 0.35 & 0.02 & 0.03 & 0.04 & 0.01 & 0.02 & 0.03 \\
\midrule
\multirow{3}{*}{\shortstack[l]{Chatbot\\Arena}}
& Mistral-7B    & 0.25 & 0.36 & 0.47 & 0.04 & 0.06 & 0.08 & 0.03 & 0.05 & 0.06 \\
& Qwen2.5-32B   & 0.19 & 0.30 & 0.42 & 0.03 & 0.04 & 0.06 & 0.02 & 0.03 & 0.05 \\
& Llama-3-70B   & 0.16 & 0.27 & 0.38 & 0.02 & 0.03 & 0.05 & 0.01 & 0.02 & 0.04 \\
\midrule
\multirow{3}{*}{HH-RLHF}
& Mistral-7B    & 0.20 & 0.31 & 0.41 & 0.03 & 0.05 & 0.07 & 0.02 & 0.04 & 0.05 \\
& Qwen2.5-32B   & 0.15 & 0.25 & 0.36 & 0.02 & 0.03 & 0.05 & 0.01 & 0.02 & 0.04 \\
& Llama-3-70B   & 0.14 & 0.22 & 0.32 & 0.01 & 0.02 & 0.04 & 0.01 & 0.01 & 0.03 \\
\midrule
\multirow{3}{*}{AlpacaEval}
& Mistral-7B    & 0.23 & 0.35 & 0.46 & 0.04 & 0.06 & 0.07 & 0.03 & 0.04 & 0.06 \\
& Qwen2.5-32B   & 0.18 & 0.29 & 0.40 & 0.02 & 0.04 & 0.05 & 0.01 & 0.03 & 0.04 \\
& Llama-3-70B   & 0.15 & 0.24 & 0.36 & 0.02 & 0.03 & 0.04 & 0.01 & 0.02 & 0.03 \\
\bottomrule
\end{tabular}%
}
\caption{\textbf{Ranking loss (additional judges)} ($T{=}20$ thresholds, $\alpha{=}0.10$). Same format as \Cref{tab:ranking-loss}.}
\label{tab:ranking-loss-app}
\end{table*}

\begin{figure*}[h]
\begin{minipage}[t]{0.49\textwidth}
\begin{algorithm}[H]
\caption{Two-Stage Certified Selection (Ours)}
\label{alg:two-stage}
\small
\begin{algorithmic}[1]
\Require Calibration set $\mathcal{D}=\{(x_i,y_i)\}_{i=1}^n$; test input $x$ with $m$ candidates; risk levels $\alpha,\varepsilon,\delta$; threshold grid $\Lambda=\{0.99,\cdots,0.00\}$
\Require Stage~I scoring $\mathbb{S}_c$ {\small\textit{(default: EMP)}}; Stage~II scoring $\mathbb{S}_h$ {\small\textit{(plug-in: EMP, KL, or Vote)}}; confidence type {\small\textit{(top-1 or margin)}}
\Statex \hrulefill\ \textbf{Stage~I: shortlist size $k$}\ \hrulefill
\For{$i=1,\dots,n$}
  \State $R(z_i) \gets |\{j : \mathbb{S}_c(x_i, g_j) \ge \mathbb{S}_c(x_i, g_{y_i})\}|$
  \Comment{rank of true label}
  \hfill{\color{gray}\eqref{eq:rank-nonconf-c}}
\EndFor
\State $k \gets \lceil(n{+}1)(1{-}\alpha)\rceil$-th order statistic of $\{R(z_i)\}_{i=1}^n$
\hfill{\color{gray}\eqref{eq:k-def-c}}
\Statex \hrulefill\ \textbf{Stage~II: threshold $\lambda^*$}\ \hrulefill
\State $\mathcal{S}(x_i)\!\gets\!\text{top-}k$ under $\mathbb{S}_c(x_i,\cdot)$, $\forall\,i$
\hfill{\color{gray}\eqref{eq:area-set-c}}
\State $\mathcal{D}_{\mathrm{in}}\!\gets\!\{(x_i,y_i)\!\in\!\mathcal{D}: y_i\!\in\!\mathcal{S}(x_i)\}$
\For{$j\in\mathcal{D}_{\mathrm{in}}$}
  \State $\hat y_j\!\gets\!\arg\max_{i\in\mathcal{S}(x_j)}\mathbb{S}_h(x_j,g_i)$
  \State $c_j\!\gets\!\textsc{Conf}(\hat y_j,\,\mathcal{S}(x_j),\,\mathbb{S}_h)$
  \hfill{\color{gray}Alg.\,\ref{alg:conf}}
\EndFor
\State $\lambda^*\gets -\infty$
\For{$\lambda\in\Lambda$ \textbf{(largest\,$\to$\,smallest)}}
  \State $N^{\mathrm{in}}\!\gets\!|\{j\!\in\!\mathcal{D}_{\mathrm{in}}: c_j\!\ge\!\lambda\}|$
  \State $Y^{\mathrm{in}}\!\gets\!\!\sum_{j\in\mathcal{D}_{\mathrm{in}},\,c_j\ge\lambda}\!\mathbf{1}[\hat y_j\!\ne\! y_j]$
  \State $\widehat{R}^{\mathrm{in},+}\!\gets\!\mathrm{BinomUCB}_\delta(Y^{\mathrm{in}},N^{\mathrm{in}})$
  \hfill{\color{gray}\eqref{eq:Rin-ucb-fixedseq}}
  \State \textbf{if} $\widehat{R}^{\mathrm{in},+}\!\le\!\varepsilon$
    \textbf{then} $\lambda^*\!\gets\!\lambda$;\
    \textbf{else break}
  \hfill{\color{gray}\eqref{eq:lambda-star-fixedseq}}
\EndFor
\Statex \hrulefill\ \textbf{Inference (new $x$)}\ \hrulefill
\State $\mathcal{S}(x) \gets$ indices of top-$k$ candidates under $\mathbb{S}_c(x,\cdot)$ \Comment{Stage~I: localize}
\State $\hat{y} \gets \arg\max_{i\in\mathcal{S}(x)} \mathbb{S}_h(x,g_i)$ \Comment{Stage~II: predict}
\State $c(x)\gets\textsc{Conf}(\hat y,\,\mathcal{S}(x),\,\mathbb{S}_h)$
\hfill{\color{gray}Alg.\,\ref{alg:conf}}
\State \Return $\hat y$ if $c(x)\!\ge\!\lambda^*$;\ \textsc{Abstain} o.w.
\end{algorithmic}
\end{algorithm}
\end{minipage}%
\hfill
\begin{minipage}[t]{0.49\textwidth}
\begin{algorithm}[H]
\caption{Single-Stage Baseline~\citep{jung2024trust}}
\label{alg:single-stage}
\small
\begin{algorithmic}[1]
\Require Calibration set $\mathcal{D}=\{(x_i,y_i)\}_{i=1}^n$; test input $x$ with $m$ candidates; risk level $\varepsilon',\delta$; threshold grid $\Lambda=\{0.99,\cdots,0.00\}$
\Require Per-candidate scoring $\mathbb{S}_h$ {\small\textit{(plug-in: EMP, KL, or Vote)}}; confidence type {\small\textit{(top-1 or margin)}}
\Statex \hrulefill\ \textbf{Calibration (no localization)}\ \hrulefill
\State $C\gets\{1,\dots,m\}$
\For{$j\in\mathcal{D}$}
  \State $\hat y_j\gets\arg\max_{i\in C}\mathbb{S}_h(x_j,g_i)$
  \State $c_j\gets\textsc{Conf}(\hat y_j,\,C,\,\mathbb{S}_h)$
  \hfill{\color{gray}Alg.\,\ref{alg:conf}}
\EndFor
\State $\lambda^*\gets -\infty$
\For{$\lambda\in\Lambda$ \textbf{(largest\,$\to$\,smallest)}}
  \State $N\gets|\{j: c_j\ge\lambda\}|$
  \State $Y\gets\sum_{j:\,c_j\ge\lambda}\mathbf{1}[\hat y_j\ne y_j]$
  \State $\widehat R^+\!\gets\!\mathrm{BinomUCB}_\delta(Y,N)$
  \State \textbf{if} $\widehat R^+\!\le\!\varepsilon'$
    \textbf{then} $\lambda^*\!\gets\!\lambda$;\
    \textbf{else break}
\EndFor
\Statex \hrulefill\ \textbf{Inference (new $x$)}\ \hrulefill
\State $\hat y\gets\arg\max_{i\in C}\mathbb{S}_h(x,g_i)$
\State $c(x)\gets\textsc{Conf}(\hat y,\,C,\,\mathbb{S}_h)$
\State \Return $\hat y$ if $c(x)\!\ge\!\lambda^*$;\ \textsc{Abstain} o.w.
\end{algorithmic}
\end{algorithm}
\vspace{-1.0em}
\begin{algorithm}[H]
\caption{\textsc{Conf}: Confidence Function}
\label{alg:conf}
\small
\begin{algorithmic}[1]
\Require Winner $\hat y$; candidate set $C$; scoring $\mathbb{S}_h$
\State $s_1\gets\mathbb{S}_h(x,g_{\hat y})$
\State $s_2\gets\max_{i\in C,\,i\ne\hat y}\mathbb{S}_h(x,g_i)$
\If{\textbf{top-1} mode}
  \State \Return $s_1$
\ElsIf{\textbf{margin} mode} \hfill{\color{gray}\eqref{eq:marginS}}
  \State \Return $s_1 - s_2$
\EndIf
\end{algorithmic}
\end{algorithm}
\end{minipage}
\end{figure*}

\section{Scoring Functions}
\label{app:scoring}

This section defines all scoring functions used in our experiments.
All scores are computed from the stored Simulated Annotator probability vectors~\citep{jung2024trust}: for each instance $x$ with $m$ candidates, we collect $\mathbf{p}_j = (p_{j,1},\dots,p_{j,m})$ from each of $N{=}5$ annotators (each conditioned on $K{=}5$ few-shot examples), where $p_{j,i} = P_\theta(y{=}i \mid x; \text{annotator}_j)$.

\subsection{Stage~I: Conformal Localization Scoring}
\label{app:scoring-stage1}

Stage~I uses a per-candidate score $\mathbb{S}_c(x, g_i)$ to rank all $m$ candidates for conformal localization.
Throughout this paper we use the EMP:
\[
\mathbb{S}_c(x, g_i) \;=\; \bar{p}_i \;=\; \frac{1}{N}\sum_{j=1}^{N} p_{j,i},
\]
the average predicted probability across annotators.
The top-$k$ candidates under EMP form the shortlist $\mathcal{S}(x)$.
This scoring function is used exclusively for Stage~I localization and is fixed throughout all experiments.

\subsection{Stage~II and Single-Stage: Selection Scoring}
\label{app:scoring-stage2}

Stage~II of our two-stage framework and the single-stage baseline of \citet{jung2024trust} share the same set of scoring functions; the only difference is the candidate set over which they operate:
Stage~II scores candidates within the localized shortlist $\mathcal{S}(x)$, while the single-stage baseline scores all $m$ candidates directly.
In both cases, the system uses a per-candidate scoring function $\mathbb{S}_h$ to rank candidates, then applies a confidence function to decide whether to accept or abstain.

\paragraph{Per-candidate scoring functions.}
Given a candidate set $C$ (the shortlist $\mathcal{S}(x)$ for Stage~II, or the full set $\{1,\dots,m\}$ for the single-stage baseline), a per-candidate scoring function assigns a score $\mathbb{S}_h(x, g_i)$ to each $i \in C$.
The predicted winner is $\hat{y} = \arg\max_{i \in C} \mathbb{S}_h(x, g_i)$.
We consider three choices:

\begin{itemize}[nosep,leftmargin=*]
\item \textbf{EMP.}
$\mathbb{S}_h(x, g_i) = \bar{p}_i = \frac{1}{N}\sum_{j=1}^{N} p_{j,i}$.
This uses the same ensemble mean probability as Stage~I but serves a different role: here it provides per-candidate scores for the selection decision within $C$, rather than for conformal localization over all $m$ candidates.

\item \textbf{KL-divergence scoring with variance normalization.}
We derive the per-candidate score from KL divergence.
First, we renormalize each annotator's probabilities to the candidate set $C$:
$q_{i}^{(j)} = p_{j,i} / \sum_{l \in C} p_{j,l}$,
which yields a valid distribution over $C$ and removes the influence of eliminated candidates.
To measure how strongly annotator $j$ supports candidate $i$, we compute the KL divergence from the point mass $e_i$ (i.e., the distribution that places all mass on $i$) to the annotator's renormalized distribution:
$\mathrm{KL}(e_i \| \mathbf{q}^{(j)}) = -\log q_i^{(j)}$.
A smaller KL divergence indicates stronger support.
Since KL is a distance (smaller is better) but we need a weight (larger is better), we apply the exponential transform:
$\exp(-\mathrm{KL}(e_i \| \mathbf{q}^{(j)})) = q_i^{(j)}$.
That is, the renormalized probability $q_i^{(j)}$ directly serves as the KL-derived voting weight.
The per-candidate score aggregates this weight across annotators who vote for $i$ and normalizes for cross-instance comparability:
\begin{small}
\begin{equation}
\mathbb{S}_h(x, g_i) = \frac{\frac{1}{N}\sum_{j=1}^{N} q_i^{(j)} \cdot \mathbf{1}\!\big\{\arg\max_{l \in C} q_l^{(j)} = i\big\}}{\sqrt{\sigma_C^2 + \bar{v}_i}},
\label{eq:kl-score}
\end{equation}
\end{small}where the numerator averages the KL-derived weight $q_i^{(j)}$ over the annotators whose top choice within $C$ is $i$;
$\bar{v}_i$ is the variance of $q_i^{(j)}$ among those voters (rewarding consensus); and
$\sigma_C^2$ is the variance of $\{\bar{q}_i\}_{i \in C}$ across candidates (a shared scale factor that enables cross-instance comparability and prevents the denominator from collapsing when $\bar{v}_i$ is small).

\item \textbf{Vote.}
The vote share for candidate~$i$:
$\mathbb{S}_h(x, g_i) = \frac{1}{N}\sum_{j=1}^{N} \mathbf{1}\!\big[\arg\max_{i' \in C} p_{j,i'} = i\big]$,
i.e., the fraction of annotators whose top-1 prediction within $C$ is $i$.
This is a non-parametric alternative that discards probability magnitudes and only counts consensus.
\end{itemize}

\paragraph{Confidence functions.}
Given per-candidate scores $\mathbb{S}_h(x, g_i)$ and the predicted winner $\hat{y}$, a confidence function produces a scalar that is compared against the calibrated threshold $\lambda^*$ to decide acceptance.
We consider two types:

\begin{itemize}[nosep,leftmargin=*]
\item \textbf{Top-1.}
The confidence is the score of the predicted winner:
$c(x) = \mathbb{S}_h(x, g_{\hat y})$.
The system accepts if $c(x) \ge \lambda^*$.
This captures whether the top candidate has a sufficiently high absolute score.

\item \textbf{Margin.}
The confidence is the gap between the top two candidates:
$c(x) = \mathbb{S}_h(x, g_{\hat y}) - \max_{i \in C,\, i \neq \hat y} \mathbb{S}_h(x, g_i)$.
The system accepts if $c(x) \ge \lambda^*$.
This captures whether the top candidate is sufficiently separated from the runner-up, corresponding to $\Delta_\mathcal{S}(x)$ in \eqref{eq:marginS}.
\end{itemize}

\paragraph{Named configurations.}
Combining three per-candidate scoring functions with two confidence functions yields the following named configurations, used consistently in both the single-stage and two-stage columns of \Cref{tab:main,tab:main-app}:

\begin{center}
\small
\resizebox{\columnwidth}{!}{%
\begin{tabular}{lll}
\toprule
Name in tables & Per-candidate scoring & Confidence type \\
\midrule
top-1 EMP    & EMP  & top-1 \\
EMP margin   & EMP  & margin \\
top-1 KL     & KL-divergence scoring & top-1 \\
KL margin    & KL-divergence scoring & margin \\
Vote         & Vote & top-1 \\
\bottomrule
\end{tabular}%
}
\end{center}

\begin{figure*}[h]
    \centering
    \includegraphics[width=\textwidth]{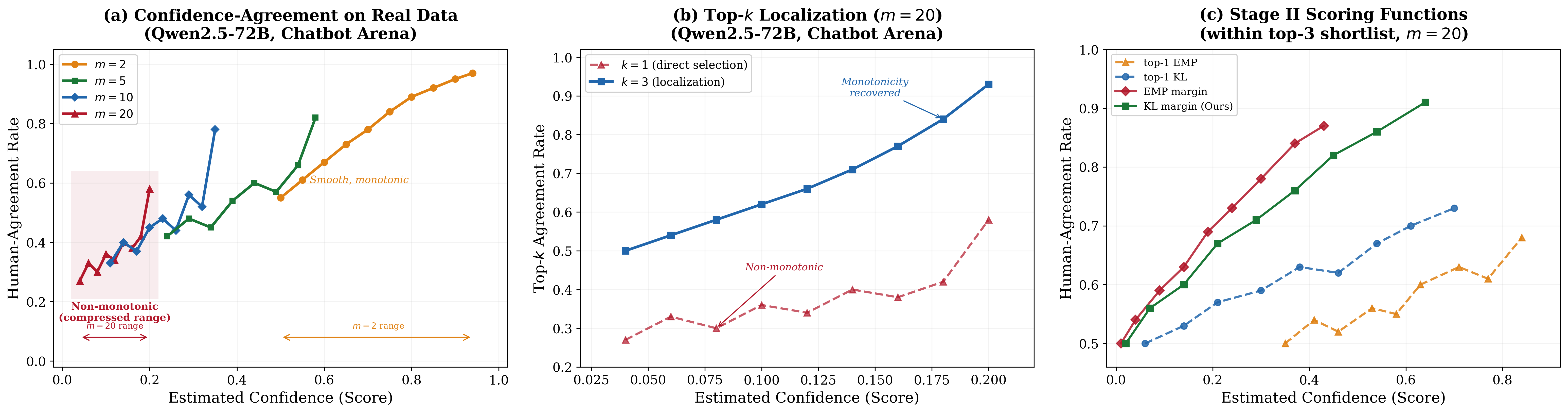}
    \vspace{-8mm}
    \caption{\textbf{Confidence--agreement curves on Chatbot Arena and Qwen2.5-72B.}
    Same layout as \Cref{fig:real-motivation}.
    \textbf{(a)}~Single-stage $m$-to-$1$.
    \textbf{(b)}~Stage~I top-$k$ localization.
    \textbf{(c)}~Stage~II scoring within the top-3 shortlist ($m{=}20$).
    }
    \label{fig:real-motivation-chatbot}
\end{figure*}

\begin{figure*}[h]
    \centering
    \includegraphics[width=\textwidth]{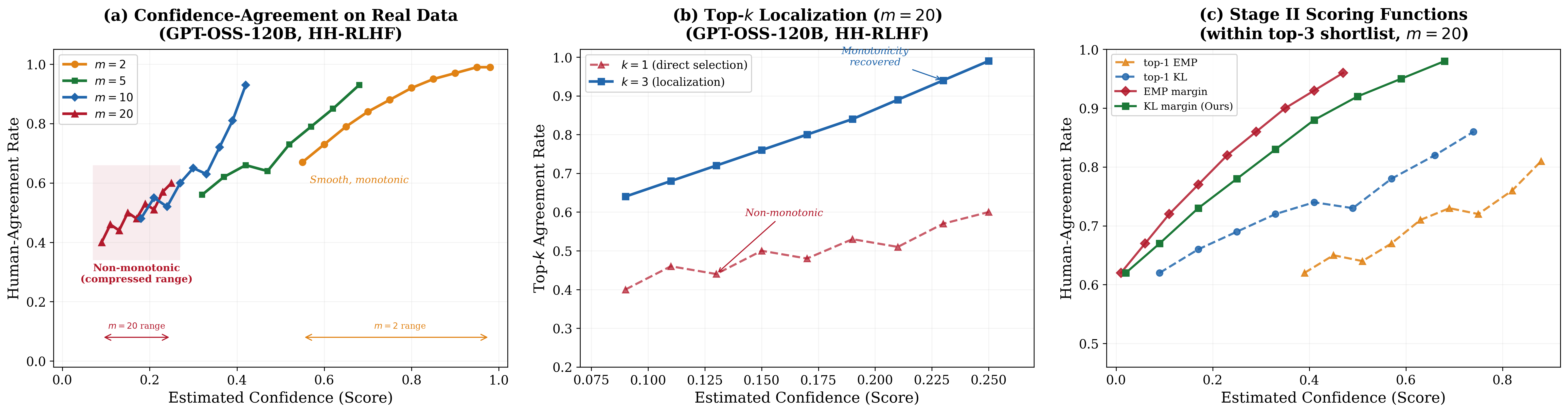}
    \vspace{-8mm}
    \caption{\textbf{Confidence--agreement curves on HH-RLHF and GPT-OSS-120B.}
    Same layout as \Cref{fig:real-motivation}.
    \textbf{(a)}~Single-stage $m$-to-$1$.
    \textbf{(b)}~Stage~I top-$k$ localization.
    \textbf{(c)}~Stage~II scoring within the top-3 shortlist ($m{=}20$).
    }
    \label{fig:real-motivation-hh}
\end{figure*}

\section{Algorithm Pseudocode}
\label{app:pseudocode}

\Cref{alg:two-stage} presents the full pseudocode for our two-stage certified selection framework.
The scoring functions $\mathbb{S}_c$ (Stage~I) and $\mathbb{S}_h$ (Stage~II) are plug-in components: $\mathbb{S}_c$ is fixed to EMP throughout, while $\mathbb{S}_h$ can be any of the per-candidate scoring functions defined in \Cref{app:scoring-stage2} (EMP, KL, or Vote), combined with either a top-1 or margin confidence function.
\Cref{alg:single-stage} shows the single-stage baseline of \citet{jung2024trust}, which corresponds to applying Stage~II directly to all $m$ candidates without prior localization.
Both algorithms are structured in two phases: calibration (offline, on $\mathcal{D}$) and test-time inference (online, per new $x$).
The key structural difference is that \Cref{alg:two-stage} first determines the shortlist size $k$ via conformal calibration and then calibrates $\lambda^*$ only on the covered subset $\mathcal{D}_{\mathrm{in}}$, whereas \Cref{alg:single-stage} calibrates $\lambda^*$ directly on all $m$ candidates without localization.

\begin{table*}[h]
\centering
\footnotesize
\resizebox{\textwidth}{!}{%
\begin{tabular}{cl ccc ccc ccc ccc}
\toprule
\multirow{2.5}{*}{Judge} & \multirow{2.5}{*}{Method} & \multicolumn{3}{c}{TL;DR} & \multicolumn{3}{c}{Chatbot Arena} & \multicolumn{3}{c}{HH-RLHF} & \multicolumn{3}{c}{AlpacaEval} \\
\cmidrule(lr){3-5}\cmidrule(lr){6-8}\cmidrule(lr){9-11}\cmidrule(lr){12-14}
& & Cov. & GSR & Agr. & Cov. & GSR & Agr. & Cov. & GSR & Agr. & Cov. & GSR & Agr. \\
\midrule
\multirow{8}{*}{\shortstack[l]{Mistral\\7B}}
& No Selection                   & 100\%& 0\% & 35\%\std{1.3} & 100\%& 0\% & 30\%\std{1.5} & 100\%& 0\% & 39\%\std{1.1} & 100\%& 0\% & 33\%\std{1.4} \\
& Single-Stage (EMP)                   & 25\%\std{3.3} & 52.4\% & 77\%\std{2.3} & 20\%\std{3.7} & 46.8\% & 74\%\std{2.5} & 29\%\std{3.0} & 57.6\% & 79\%\std{2.0} & 23\%\std{3.5} & 50.3\% & 76\%\std{2.3} \\
& Single-Stage (EMP margin)            & 30\%\std{3.1} & 57.3\% & 78\%\std{2.1} & 25\%\std{3.4} & 51.8\% & 76\%\std{2.3} & 34\%\std{2.8} & 62.7\% & 80\%\std{1.9} & 28\%\std{3.2} & 55.1\% & 77\%\std{2.2} \\
& Single-Stage (KL margin)              & 34\%\std{2.9} & 62.4\% & 80\%\std{2.0} & 29\%\std{3.2} & 56.9\% & 78\%\std{2.1} & 38\%\std{2.6} & 67.8\% & 82\%\std{1.8} & 32\%\std{3.0} & 60.6\% & 79\%\std{2.1} \\
& Single-Stage (Vote)                   & 32\%\std{3.0} & 59.8\% & 79\%\std{2.0} & 27\%\std{3.3} & 54.1\% & 77\%\std{2.2} & 36\%\std{2.7} & 64.5\% & 81\%\std{1.8} & 30\%\std{3.1} & 57.9\% & 78\%\std{2.1} \\
\cmidrule(l){2-14}
\rowcolor{blue!6} \cellcolor{white} & \textbf{Two-Stage (EMP margin)}     & 38\%\std{2.7} & \textbf{90.6\%} & 84\%\std{1.6} & 33\%\std{3.0} & \textbf{90.1\%} & 82\%\std{1.8} & 42\%\std{2.4} & \textbf{92.3\%} & 85\%\std{1.4} & 36\%\std{2.8} & \textbf{90.4\%} & 83\%\std{1.7} \\
\rowcolor{blue!6} \cellcolor{white} & \textbf{Two-Stage (KL margin)}       & 44\%\std{2.4} & \textbf{94.7\%} & 86\%\std{1.4} & 39\%\std{2.7} & \textbf{93.2\%} & 85\%\std{1.6} & 48\%\std{2.2} & \textbf{95.6\%} & 87\%\std{1.2} & 42\%\std{2.5} & \textbf{93.8\%} & 86\%\std{1.5} \\
\rowcolor{blue!6} \cellcolor{white} & \textbf{Two-Stage (Vote)}            & 41\%\std{2.5} & \textbf{92.8\%} & 85\%\std{1.5} & 36\%\std{2.8} & \textbf{91.4\%} & 83\%\std{1.7} & 45\%\std{2.3} & \textbf{93.7\%} & 86\%\std{1.3} & 39\%\std{2.6} & \textbf{92.1\%} & 85\%\std{1.6} \\
\midrule
\multirow{8}{*}{\shortstack[l]{Qwen2.5\\32B}}
& No Selection                   & 100\%& 0\% & 44\%\std{1.2} & 100\%& 0\% & 39\%\std{1.3} & 100\%& 0\% & 47\%\std{1.0} & 100\%& 0\% & 43\%\std{1.2} \\
& Single-Stage (EMP)                   & 34\%\std{2.8} & 57.6\% & 79\%\std{2.0} & 29\%\std{3.2} & 52.3\% & 77\%\std{2.2} & 38\%\std{2.6} & 61.8\% & 81\%\std{1.8} & 33\%\std{2.9} & 56.4\% & 78\%\std{2.1} \\
& Single-Stage (EMP margin)            & 39\%\std{2.6} & 62.8\% & 80\%\std{1.9} & 34\%\std{2.9} & 57.4\% & 79\%\std{2.1} & 43\%\std{2.5} & 66.9\% & 82\%\std{1.7} & 38\%\std{2.7} & 61.3\% & 80\%\std{1.9} \\
& Single-Stage (KL margin)              & 43\%\std{2.5} & 67.6\% & 82\%\std{1.7} & 38\%\std{2.8} & 62.8\% & 80\%\std{1.9} & 47\%\std{2.3} & 71.4\% & 83\%\std{1.6} & 42\%\std{2.6} & 66.1\% & 82\%\std{1.8} \\
& Single-Stage (Vote)                   & 41\%\std{2.6} & 64.7\% & 81\%\std{1.8} & 36\%\std{2.8} & 59.6\% & 79\%\std{2.0} & 45\%\std{2.4} & 68.3\% & 82\%\std{1.6} & 40\%\std{2.6} & 63.4\% & 81\%\std{1.8} \\
\cmidrule(l){2-14}
\rowcolor{blue!6} \cellcolor{white} & \textbf{Two-Stage (EMP margin)}     & 49\%\std{2.3} & \textbf{91.4\%} & 85\%\std{1.4} & 44\%\std{2.5} & \textbf{90.3\%} & 84\%\std{1.6} & 53\%\std{2.1} & \textbf{92.8\%} & 86\%\std{1.2} & 48\%\std{2.4} & \textbf{91.1\%} & 85\%\std{1.5} \\
\rowcolor{blue!6} \cellcolor{white} & \textbf{Two-Stage (KL margin)}       & 55\%\std{2.1} & \textbf{95.2\%} & 87\%\std{1.2} & 50\%\std{2.3} & \textbf{93.8\%} & 86\%\std{1.4} & 59\%\std{1.9} & \textbf{95.7\%} & 88\%\std{1.0} & 54\%\std{2.1} & \textbf{94.6\%} & 87\%\std{1.3} \\
\rowcolor{blue!6} \cellcolor{white} & \textbf{Two-Stage (Vote)}            & 52\%\std{2.2} & \textbf{93.6\%} & 86\%\std{1.3} & 47\%\std{2.4} & \textbf{92.1\%} & 85\%\std{1.5} & 56\%\std{2.0} & \textbf{93.9\%} & 87\%\std{1.1} & 51\%\std{2.2} & \textbf{92.7\%} & 86\%\std{1.4} \\
\midrule
\multirow{8}{*}{\shortstack[l]{Llama-3\\70B}}
& No Selection                   & 100\%& 0\% & 50\%\std{1.1} & 100\%& 0\% & 45\%\std{1.3} & 100\%& 0\% & 53\%\std{1.0} & 100\%& 0\% & 49\%\std{1.2} \\
& Single-Stage (EMP)                   & 39\%\std{2.7} & 60.6\% & 80\%\std{1.9} & 34\%\std{3.0} & 55.3\% & 78\%\std{2.1} & 43\%\std{2.6} & 64.8\% & 82\%\std{1.7} & 38\%\std{2.8} & 59.7\% & 80\%\std{2.0} \\
& Single-Stage (EMP margin)            & 44\%\std{2.5} & 65.4\% & 81\%\std{1.8} & 39\%\std{2.8} & 60.7\% & 79\%\std{2.0} & 48\%\std{2.4} & 69.6\% & 83\%\std{1.6} & 43\%\std{2.6} & 64.8\% & 81\%\std{1.9} \\
& Single-Stage (KL margin)              & 48\%\std{2.4} & 70.3\% & 83\%\std{1.7} & 43\%\std{2.6} & 65.8\% & 81\%\std{1.9} & 52\%\std{2.2} & 74.6\% & 84\%\std{1.5} & 47\%\std{2.4} & 69.4\% & 83\%\std{1.7} \\
& Single-Stage (Vote)                   & 46\%\std{2.5} & 67.8\% & 82\%\std{1.7} & 41\%\std{2.7} & 62.4\% & 80\%\std{2.0} & 50\%\std{2.3} & 71.9\% & 84\%\std{1.6} & 45\%\std{2.5} & 66.3\% & 82\%\std{1.8} \\
\cmidrule(l){2-14}
\rowcolor{blue!6} \cellcolor{white} & \textbf{Two-Stage (EMP margin)}     & 55\%\std{2.1} & \textbf{92.1\%} & 86\%\std{1.3} & 50\%\std{2.4} & \textbf{90.6\%} & 85\%\std{1.5} & 59\%\std{2.0} & \textbf{93.4\%} & 87\%\std{1.1} & 54\%\std{2.2} & \textbf{91.8\%} & 86\%\std{1.4} \\
\rowcolor{blue!6} \cellcolor{white} & \textbf{Two-Stage (KL margin)}       & 62\%\std{1.9} & \textbf{95.4\%} & 88\%\std{1.1} & 57\%\std{2.1} & \textbf{94.1\%} & 87\%\std{1.3} & 66\%\std{1.8} & \textbf{96.3\%} & 89\%\std{0.9} & 61\%\std{2.0} & \textbf{95.2\%} & 88\%\std{1.2} \\
\rowcolor{blue!6} \cellcolor{white} & \textbf{Two-Stage (Vote)}            & 59\%\std{2.0} & \textbf{93.8\%} & 87\%\std{1.2} & 54\%\std{2.2} & \textbf{92.6\%} & 86\%\std{1.4} & 63\%\std{1.9} & \textbf{94.9\%} & 88\%\std{1.0} & 58\%\std{2.1} & \textbf{93.4\%} & 87\%\std{1.3} \\
\bottomrule
\end{tabular}%
}
\caption{\textbf{Two-stage vs.\ single-stage (additional judges)} ($m{=}10$, target agreement $= 0.81$, $\delta{=}0.10$). Same format as \Cref{tab:main}. Coverage and Agr.\ are reported as mean $\pm$ std over 1{,}000 random splits.}
\label{tab:main-app}
\end{table*}

\begin{table*}[h]
\centering
\scriptsize
\resizebox{\textwidth}{!}{%
\begin{tabular}{l l cc ccccc}
\toprule
\multirow{4}{*}{Dataset} & \multirow{4}{*}{Cascade Configuration} & \multicolumn{2}{c}{Single-Stage} & \multicolumn{5}{c}{\textbf{Two-Stage (Ours)}} \\
\cmidrule(lr){3-4}\cmidrule(lr){5-9}
& & \multirow{2.5}{*}{Cov.} & \multirow{2.5}{*}{GSR} & \multirow{2.5}{*}{Cov.} & \multirow{2.5}{*}{GSR} & \multicolumn{3}{c}{Comp.\ (\%)} \\
\cmidrule(lr){7-9}
& & & & & & $T_1$ & $T_2$ & $T_3$ \\
\midrule
\multirow{4}{*}{\shortstack[l]{Chatbot\\Arena}}
& Qwen2.5-72B only                   & 44\%\std{2.6} & 67.4\%  &\cellcolor{blue!6} 58\%\std{2.2} &\cellcolor{blue!6} 94.6\%  & -- & -- & 100.0 \\
& Qwen-7B $\to$ 32B $\to$ 72B       & 55\%\std{3.0} & 61.8\%  &\cellcolor{blue!6} 74\%\std{2.1} &\cellcolor{blue!6} \textbf{92.3\%}  & 38.5 & 35.2 & 26.3 \\
& Mistral-7B $\to$ DS-67B $\to$ GPT-120B & 53\%\std{3.2} & 60.4\%  &\cellcolor{blue!6} 73\%\std{2.3} &\cellcolor{blue!6} \textbf{91.7\%}  & 35.8 & 37.9 & 26.3 \\
& DeepSeek-16B $\to$ Qwen-72B       & 52\%\std{3.4} & 59.6\%  &\cellcolor{blue!6} 70\%\std{2.5} &\cellcolor{blue!6} \textbf{90.8\%}  & 46.1 & 53.9 & -- \\
\midrule
\multirow{4}{*}{HH-RLHF}
& Qwen2.5-72B only                   & 52\%\std{2.3} & 75.3\%  &\cellcolor{blue!6} 66\%\std{1.8} &\cellcolor{blue!6} 96.2\%  & -- & -- & 100.0 \\
& Qwen-7B $\to$ 32B $\to$ 72B       & 64\%\std{2.5} & 69.4\%  &\cellcolor{blue!6} 81\%\std{1.8} &\cellcolor{blue!6} \textbf{93.6\%}  & 37.1 & 36.0 & 26.9 \\
& Mistral-7B $\to$ DS-67B $\to$ GPT-120B & 62\%\std{2.7} & 67.8\%  &\cellcolor{blue!6} 80\%\std{2.0} &\cellcolor{blue!6} \textbf{92.8\%}  & 34.7 & 38.5 & 26.8 \\
& DeepSeek-16B $\to$ Qwen-72B       & 60\%\std{2.9} & 66.3\%  &\cellcolor{blue!6} 77\%\std{2.2} &\cellcolor{blue!6} \textbf{91.6\%}  & 44.8 & 55.2 & -- \\
\midrule
\multirow{4}{*}{AlpacaEval}
& Qwen2.5-72B only                   & 49\%\std{2.5} & 71.2\%  &\cellcolor{blue!6} 63\%\std{2.0} &\cellcolor{blue!6} 95.1\%  & -- & -- & 100.0 \\
& Qwen-7B $\to$ 32B $\to$ 72B       & 58\%\std{2.8} & 63.6\%  &\cellcolor{blue!6} 76\%\std{2.0} &\cellcolor{blue!6} \textbf{92.4\%}  & 38.0 & 35.8 & 26.2 \\
& Mistral-7B $\to$ DS-67B $\to$ GPT-120B & 56\%\std{3.0} & 62.3\%  &\cellcolor{blue!6} 75\%\std{2.2} &\cellcolor{blue!6} \textbf{91.8\%}  & 35.5 & 38.1 & 26.4 \\
& DeepSeek-16B $\to$ Qwen-72B       & 54\%\std{3.1} & 61.7\%  &\cellcolor{blue!6} 72\%\std{2.3} &\cellcolor{blue!6} \textbf{90.9\%}  & 45.9 & 54.1 & -- \\
\bottomrule
\end{tabular}%
}
\caption{\textbf{Cascaded evaluation on additional datasets} ($m{=}10$, target agreement $= 0.81$, $\delta{=}0.10$). Same format as \Cref{tab:cascade}.}
\label{tab:cascade-app}
\vspace{-4mm}
\end{table*}

\section{Monotonicity Evaluation Settings and Confidence Distribution Analysis}
\label{app:conf-dist}

This appendix provides the detailed experimental settings used to produce the monotonicity results in \Cref{tab:ranking-loss} and \Cref{fig:real-motivation}, and presents additional confidence-distribution analysis.

\paragraph{Confidence and agreement definitions.}
\Cref{tab:ranking-loss} and \Cref{fig:real-motivation} share the same three evaluation settings, each corresponding to a distinct stage of the selection pipeline.
The confidence and agreement definitions for each setting are as follows:

\begin{itemize}[nosep,leftmargin=*]
    \item \textbf{Direct $m$-to-$1$ (single-stage baseline):} The confidence score is the top-1 EMP, i.e., $c(x)=\max_i\bar{p}_i$ where $\bar{p}_i = \frac{1}{N}\sum_{j=1}^{N}p_{j,i}$. The agreement indicator is $\mathbf{1}[\hat y = y]$, where $\hat y = \arg\max_i \bar{p}_i$.
    \item \textbf{Stage~I ($m$-to-$k$ localization):} The confidence score is the same top-1 EMP as above. The agreement indicator changes to $\mathbf{1}[y\in\text{top-}k]$, i.e., whether the human-preferred response lies within the top-$k$ candidates ranked by EMP. The shortlist size $k$ is determined by the conformal quantile (\Cref{eq:k-def-c}) with $\alpha{=}0.10$.
    \item \textbf{Stage~II ($k$-to-$1$ selection within shortlist):} The confidence score is the KL-divergence margin $\Delta_\mathcal{S}(x)$ defined in \Cref{eq:marginS}, computed within the shortlist $\mathcal{S}(x)$. The agreement indicator is $\mathbf{1}[\hat y = y \mid y\in\mathcal{S}(x)]$, restricted to instances where the human-preferred response is covered by the shortlist.
\end{itemize}

\paragraph{Ranking loss computation (\Cref{tab:ranking-loss}).}
For \Cref{tab:ranking-loss}, the $T{=}20$ confidence thresholds are evenly spaced over the observed range $[\min_i c(x_i),\, \max_i c(x_i)]$ for each (dataset, judge, $m$) combination.
The ranking loss (defined in \Cref{sec:exp-motivation}) is computed on the full dataset without calibration/test splitting to maximize statistical stability.

\paragraph{Confidence--agreement curves (\Cref{fig:real-motivation}).}
\Cref{fig:real-motivation} visualizes the confidence--agreement relationship using Qwen2.5-72B on TL;DR as a representative pair, chosen because it is a mid-strength judge that clearly exhibits the monotonicity breakdown in single-stage evaluation.
The three panels correspond to the three settings defined above:
panel~(a) sweeps $m\in\{2,5,10,20\}$ under the Direct $m$-to-$1$ setting;
panel~(b) fixes $m{=}20$ and compares $k{=}1$ (direct selection) against $k{=}3$ (localization);
panel~(c) fixes $m{=}20$ within the top-3 shortlist and compares four Stage~II scoring functions (top-1 EMP, top-1 KL, EMP margin, and KL margin, as defined in \Cref{app:scoring}).
The same qualitative patterns hold for other (dataset, judge) combinations, as confirmed by \Cref{tab:ranking-loss,tab:ranking-loss-app}.
\Cref{fig:real-motivation-chatbot,fig:real-motivation-hh} present the corresponding figures for Chatbot Arena (Qwen2.5-72B) and HH-RLHF (GPT-OSS-120B), showing that the monotonicity breakdown and two-stage recovery are consistent across both datasets and judge models.

\paragraph{Confidence-score distributions (\Cref{fig:conf-dist}).}
\Cref{fig:conf-dist} visualizes the distributions of KL margin scores for correct versus incorrect predictions.
In the single-stage setting (panel~a), the KL margin is computed over all $m$ candidates; in the two-stage setting (panel~b), it is computed within the shortlist $\mathcal{S}(x)$.
After Stage~I localization, the correct and incorrect distributions become substantially more separated (median gap increases from $0.06$ to $0.92$), confirming that localization improves the discriminative power of Stage~II scoring and enables more reliable threshold-based selection.

\section{Additional Validation and Ablation Studies}
\label{app:additional-validation}

This section reports targeted analyses of the cross-stage monotonicity condition, a strengthened single-stage baseline, and the components of the KL-margin score.

\subsection{Direct Validation of Cross-Stage Monotonicity}

For each configuration, we estimate
$\mathbb{P}(y\notin\mathcal{S}(x)\mid\Delta_{\mathcal{S}}(x)\geq\lambda)$
over 1{,}000 random calibration/test splits with $\alpha{=}0.10$.
\Cref{tab:cross-stage-monotonicity} reports representative configurations.

\begin{table*}[t]
\centering
\scriptsize
\resizebox{\textwidth}{!}{%
\begin{tabular}{lccccc}
\toprule
(Judge, Dataset, $m$) & $\lambda{=}0.0$ & $\lambda{=}0.2$ & $\lambda{=}0.4$ & $\lambda{=}0.6$ & $\lambda{=}0.8$ \\
\midrule
Llama-3-8B, TL;DR, $m{=}5$              & 9.6\% & 5.8\% & 3.1\% & 1.5\% & 0.6\% \\
Llama-3-8B, Chatbot Arena, $m{=}10$      & 9.9\% & 6.5\% & 3.7\% & 1.7\% & 0.7\% \\
Qwen2.5-72B, TL;DR, $m{=}10$             & 8.4\% & 5.2\% & 2.9\% & 1.3\% & 0.5\% \\
GPT-OSS-120B, TL;DR, $m{=}10$            & 7.1\% & 4.3\% & 2.2\% & 0.9\% & 0.3\% \\
GPT-OSS-120B, Chatbot Arena, $m{=}10$    & 8.2\% & 5.0\% & 2.7\% & 1.1\% & 0.4\% \\
GPT-OSS-120B, HH-RLHF, $m{=}10$          & 6.5\% & 3.9\% & 1.9\% & 0.8\% & 0.3\% \\
\bottomrule
\end{tabular}%
}
\caption{\textbf{Direct validation of the cross-stage monotonicity condition.}
Conditional shortlist miss probability (\%) as a function of the Stage~II margin threshold.}
\label{tab:cross-stage-monotonicity}
\end{table*}

For every configuration shown, the conditional miss probability decreases monotonically with $\lambda$.
At $\lambda{=}0$, it remains below the marginal failure level $\alpha{=}0.10$; at $\lambda{=}0.8$, it falls below 1\%.

\paragraph{Interpreting GSR.}
For each split, empirical agreement on accepted test instances estimates the inner population quantity
$\mathbb{P}_{(x,y)}(\hat y=y\mid\Delta_{\mathcal{S}}(x)\geq\lambda^*)$.
Across the 1{,}000 random calibration splits, GSR estimates the frequency with which this empirical agreement reaches the target $(1-\varepsilon)(1-\alpha)$, corresponding to the outer confidence requirement $1-\delta$.
Because each test set is finite, GSR should be interpreted as an empirical validation of the nested guarantee rather than an exact observation of the population probability.

\subsection{Variance-Normalized Single-Stage Baseline}

To test whether confidence-level normalization alone resolves probability dilution, we augment the direct $m$-to-$1$ adaptation of \citet{jung2024trust} with variance normalization.
\Cref{tab:variance-normalized-baseline} compares this strengthened baseline with the strongest single-stage scoring baseline and our two-stage method under the same representative setting.

\begin{table}[t]
\centering
\small
\resizebox{\columnwidth}{!}{%
\begin{tabular}{lcc}
\toprule
Method & GSR & Cov. \\
\midrule
Jung et al.\ ($m{>}2$ adapted) + variance normalization & 74.2\% & 53\% \\
Single-Stage (KL margin)                                & 76.4\% & 56\% \\
Two-Stage (KL margin)                                  & \textbf{96.8\%} & \textbf{70\%} \\
\bottomrule
\end{tabular}%
}
\caption{\textbf{Strengthened single-stage comparison} on TL;DR with GPT-OSS-120B
($m{=}10$, target agreement $=0.81$), averaged over 1{,}000 splits.}
\label{tab:variance-normalized-baseline}
\end{table}

In this setting, variance normalization does not close the gap: the two-stage method improves both GSR and coverage over the strengthened single-stage alternatives.

\subsection{Component-Wise Ablation of KL Margin}

We ablate the five components of the KL-margin score in \eqref{eq:kl-score}, removing one component at a time while retaining the other four.

\begin{table*}[!t]
\centering
\scriptsize
\resizebox{\textwidth}{!}{%
\begin{tabular}{lccccc cc}
\toprule
Variant & Renorm. & Exp. & Cond.\ Avg. & Var.\ Norm. & Scale & GSR & Cov. \\
\midrule
Full KL margin             & \checkmark & \checkmark & \checkmark & \checkmark & \checkmark & \textbf{96.8\%} & \textbf{70\%} \\
$-$ Renormalization        & -- & \checkmark & \checkmark & \checkmark & \checkmark & 83.5\% & 57\% \\
$-$ Variance normalization & \checkmark & \checkmark & \checkmark & -- & \checkmark & 89.2\% & 61\% \\
$-$ Scale factor           & \checkmark & \checkmark & \checkmark & \checkmark & -- & 91.4\% & 64\% \\
$-$ Conditional averaging  & \checkmark & \checkmark & -- & \checkmark & \checkmark & 92.6\% & 65\% \\
$-$ Exponential transform  & \checkmark & -- & \checkmark & \checkmark & \checkmark & 95.1\% & 68\% \\
\bottomrule
\end{tabular}%
}
\caption{\textbf{Component-wise KL-margin ablation} under the two-stage framework on TL;DR with GPT-OSS-120B
($m{=}10$, target agreement $=0.81$), averaged over 1{,}000 splits.}
\label{tab:kl-component-ablation}
\end{table*}

In this representative setting, removing any component reduces both GSR and coverage.
Shortlist renormalization and variance normalization produce the largest drops, while the remaining components provide smaller complementary gains.

\section{Full Per-Model and Per-Dataset Results}
\label{app:full-results}

This appendix supplements the main experiments (\Cref{sec:exp-motivation,sec:exp-main}) with results for additional judge models not shown in the main tables.
\Cref{tab:main} reports three representative judges from distinct families (Llama-3-8B, Qwen2.5-72B, GPT-OSS-120B);
here we present the same evaluations for three further judges---Mistral-7B, Qwen2.5-32B, and Llama-3-70B---covering the remaining model families and intermediate parameter scales.
Results for the other three judges (Qwen2.5-7B, DeepSeek-16B, DeepSeek-67B) follow the same trends and are omitted for brevity.

\paragraph{Ranking loss.}
\Cref{tab:ranking-loss-app} extends the monotonicity analysis of \Cref{tab:ranking-loss} to the additional judges.
The same pattern holds: single-stage ranking loss grows substantially with $m$, while both Stage~I and Stage~II maintain near-zero ranking loss across all settings.

\paragraph{Two-stage vs.\ single-stage.}
\Cref{tab:main-app} extends the main comparison of \Cref{tab:main} to the additional judges.
All trends are consistent: every two-stage variant exceeds the $1{-}\delta=90\%$ GSR target, while all single-stage baselines fall short.
Two-stage coverage improvements over the matched single-stage baseline range from 8--14 percentage points across all judges and datasets.
Beyond the tabular results, \Cref{fig:guarantee-chatbot,fig:guarantee-hh,fig:guarantee-gpt} extend the guarantee-curve and $m$-effect analysis of \Cref{fig:combined-results}(a)--(b) to additional (dataset, model) combinations.
The same qualitative conclusions hold: the two-stage empirical agreement consistently tracks above the diagonal across the full 70--95\% target range, and two-stage GSR remains above the $1{-}\delta=90\%$ threshold as $m$ grows from 5 to 20, whereas single-stage GSR degrades sharply.

\begin{figure}[h]
    \centering
    \includegraphics[width=\columnwidth]{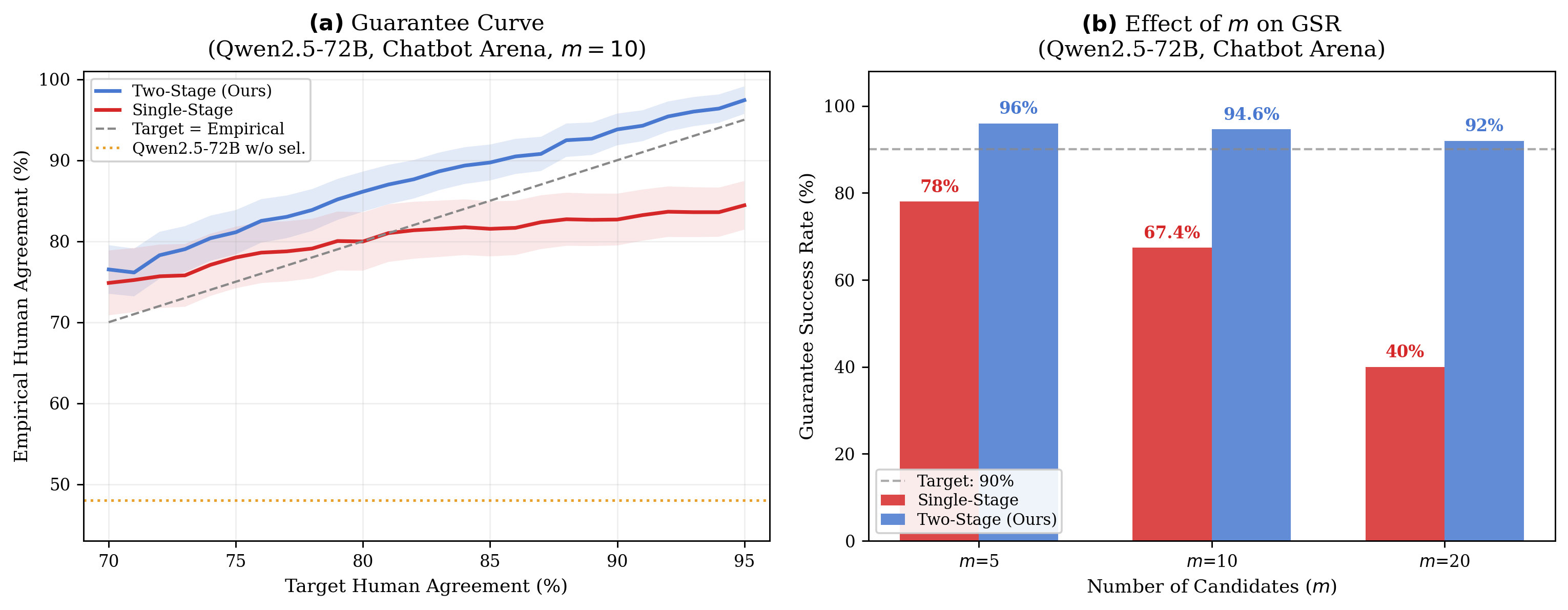}
    \vspace{-6mm}
    \caption{\textbf{Guarantee curve and $m$-effect on Chatbot Arena} (Qwen2.5-72B).
    Same layout as \Cref{fig:combined-results}(a)--(b).}
    \label{fig:guarantee-chatbot}
    \vspace{-3mm}
\end{figure}

\begin{figure}[h]
    \centering
    \includegraphics[width=\columnwidth]{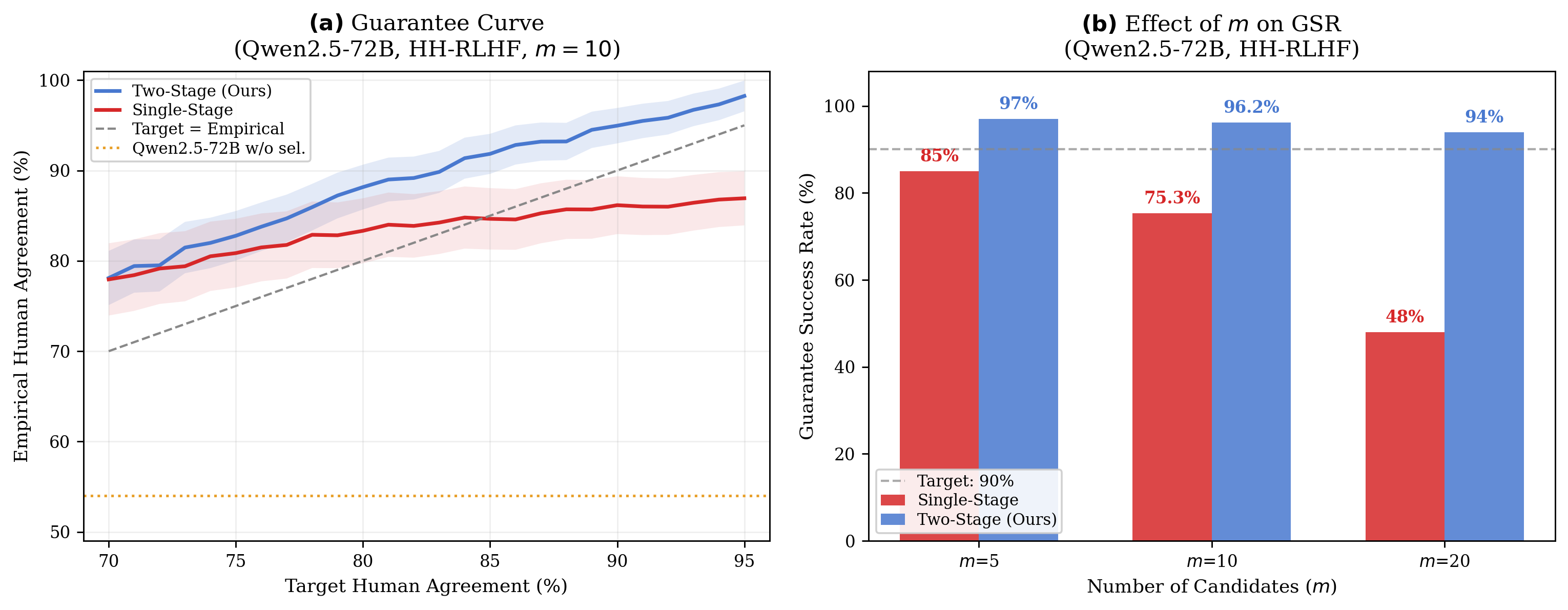}
    \vspace{-6mm}
    \caption{\textbf{Guarantee curve and $m$-effect on HH-RLHF} (Qwen2.5-72B).
    Same layout as \Cref{fig:combined-results}(a)--(b).}
    \label{fig:guarantee-hh}
    \vspace{-3mm}
\end{figure}

\begin{figure}[h]
    \centering
    \includegraphics[width=\columnwidth]{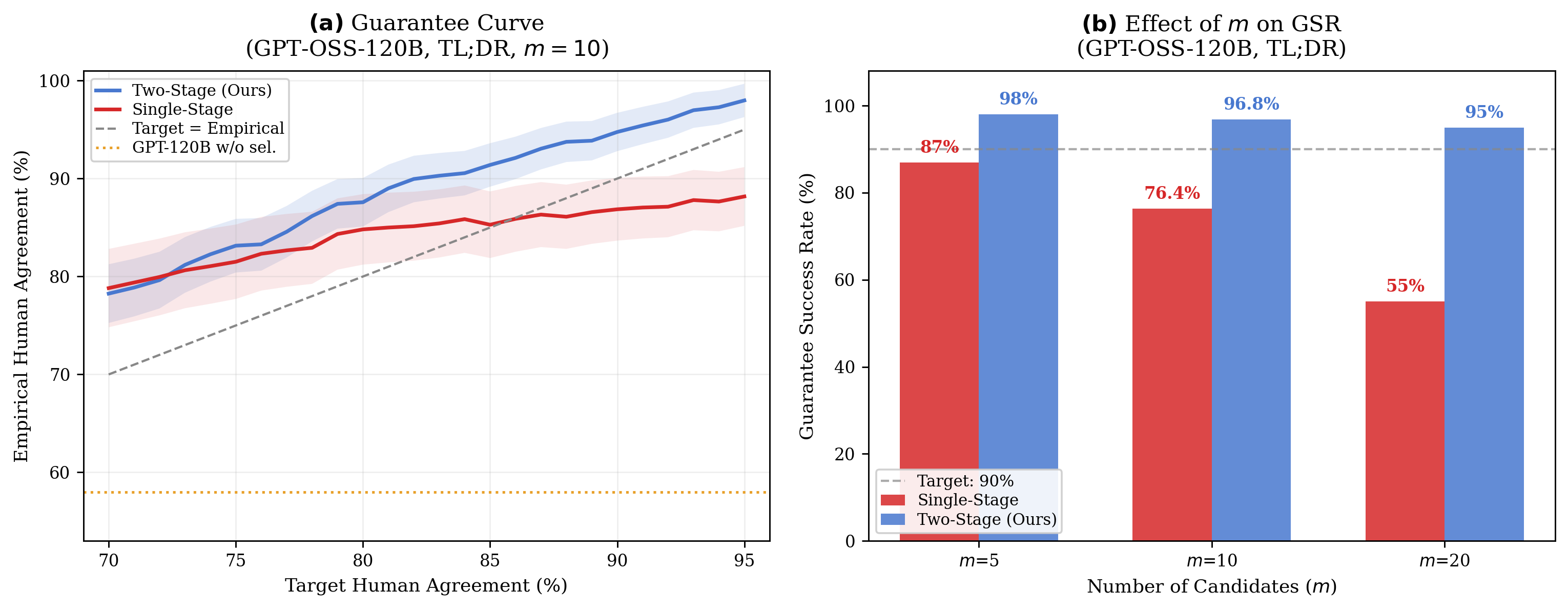}
    \vspace{-6mm}
    \caption{\textbf{Guarantee curve and $m$-effect on TL;DR} (GPT-OSS-120B).
    Same layout as \Cref{fig:combined-results}(a)--(b).}
    \label{fig:guarantee-gpt}
    \vspace{-3mm}
\end{figure}

\paragraph{Cascaded architecture on additional datasets.}
\Cref{tab:cascade} in the main text reports cascade results on TL;DR.
\Cref{tab:cascade-app} extends the evaluation to the remaining three datasets (Chatbot Arena, HH-RLHF, AlpacaEval) using representative cascade configurations.
The conclusions are fully consistent: all two-stage cascades exceed the $1{-}\delta=90\%$ GSR threshold, while all single-stage cascades fall short.
Cross-dataset variation is modest; three-tier two-stage cascades cover 73--81\% of instances and achieve 92--94\% GSR uniformly.
The tier-composition percentages are also stable across datasets: combining \Cref{tab:cascade} and \Cref{tab:cascade-app}, the weakest tier handles 32--39\% and the strongest tier handles 26--29\% of covered instances in three-tier configurations, confirming that the computational savings observed on TL;DR generalize broadly.
\Cref{fig:cascade-chatbot} further visualizes the cascaded guarantee curve and evaluator composition on Chatbot Arena, showing that the empirical agreement tracks the target throughout the 70--95\% range and that weaker tiers handle the majority of instances at lenient targets.

\begin{figure}[!b]
    \centering
    \includegraphics[width=\columnwidth]{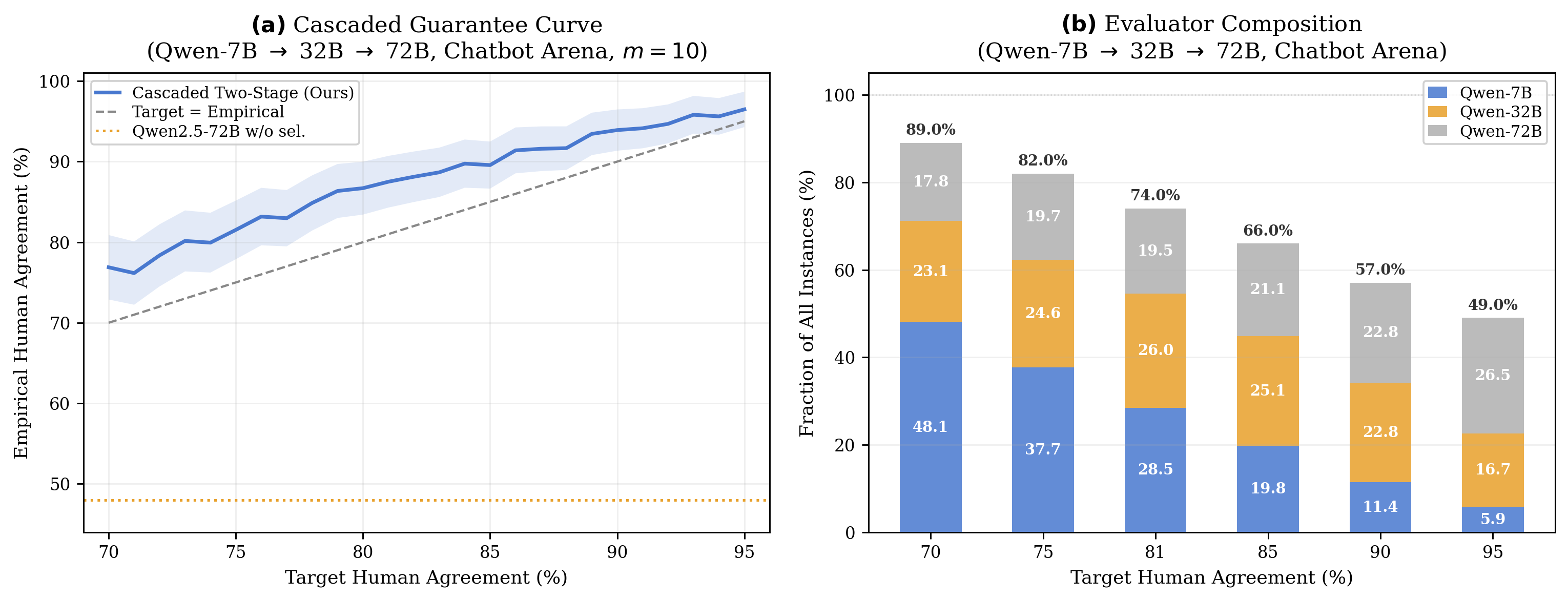}
    \vspace{-6mm}
    \caption{\textbf{Cascaded evaluation on Chatbot Arena} (Qwen-7B $\to$ 32B $\to$ 72B, $m{=}10$).
    Same layout as \Cref{fig:combined-results}(c)--(d).}
    \label{fig:cascade-chatbot}
\end{figure}

\end{document}